\documentclass{article}

\usepackage{arxiv}

\usepackage[utf8]{inputenc}
\usepackage[T1]{fontenc}
\usepackage{hyperref}
\usepackage{url}
\usepackage{booktabs}
\usepackage{amsfonts}
\usepackage{nicefrac}
\usepackage{microtype}
\usepackage{graphicx}
\usepackage{natbib}
\usepackage{doi}

\usepackage{amsmath}
\usepackage{amssymb}
\usepackage{mathtools}
\usepackage{amsthm}
\usepackage{xcolor}
\usepackage{subcaption}
\usepackage{float}
\usepackage{algorithm}
\usepackage{algorithmic}
\usepackage[textsize=tiny]{todonotes}
\usepackage{cleveref}

\usepackage{amsmath,amsfonts,bm}
\usepackage{amsthm}

\usepackage{amssymb}

\def\eqref#1{equation~\ref{#1}}

\def\1{\bm{1}}

\def\mA{{\bm{A}}}
\def\mB{{\bm{B}}}

\def\mG{{\bm{G}}}

\def\mI{{\bm{I}}}

\def\mM{{\bm{M}}}

\def\mQ{{\bm{Q}}}

\def\mS{{\bm{S}}}

\def\mU{{\bm{U}}}
\def\mV{{\bm{V}}}

\def\mX{{\bm{X}}}

\def\mZ{{\bm{Z}}}

\DeclareMathAlphabet{\mathsfit}{\encodingdefault}{\sfdefault}{m}{sl}
\SetMathAlphabet{\mathsfit}{bold}{\encodingdefault}{\sfdefault}{bx}{n}

\newcommand{\R}{\mathbb{R}}

\newcommand{\Var}{\mathrm{Var}}

\newcommand{\Cov}{\mathrm{Cov}}

\DeclareMathOperator{\sign}{sign}

\usepackage{dsfont}

\usepackage{color}
\definecolor{myfavblue}{rgb}{0.05, 0.2, 0.8}
\definecolor{keywords}{RGB}{255,0,90}
\definecolor{comments}{RGB}{0,0,113}
\definecolor{red}{RGB}{160,0,0}
\definecolor{green}{RGB}{0,150,0}
\definecolor{C0}{rgb}{0.12156862745098039, 0.4666666666666667, 0.7058823529411765}  

\definecolor{myblue}{HTML}{3182bd}
\definecolor{myred}{HTML}{de2d26}

\definecolor{mydarkblue}{rgb}{0,0.08,0.45}

\theoremstyle{plain}
\newtheorem{theorem}{Theorem}[section]

\newtheorem{lemma}[theorem]{Lemma}

\theoremstyle{definition}

\newtheorem{assumption}[theorem]{Assumption}
\theoremstyle{remark}

\newcommand*{\nameA}[1]{{\emph{OLion}}}
\newcommand*{\fullname}[1]{{\emph{Orthogonal Lion}}}
\newcommand*{\codeurl}{\url{https://github.com/kv-wang/OLion}}

\title{OLion: Approaching the Hadamard Ideal by Intersecting Spectral and $\ell_{\infty}$ Implicit Biases}

\author{
  Zixiao Wang$^*$ \\
  Peking University \\
  \texttt{herowangzx@stu.pku.edu.cn} \\
  \And
  Yifei Shen$^*$ \\
  Microsoft Research Asia\\
  \texttt{yshenaw@connect.ust.hk} \\
  \And
  Huishuai Zhang\thanks{Three authors contribute equally. Correspond to Huishuai Zhang.}\\
  Peking University\\
  \texttt{zhanghuishuai@pku.edu.cn} \\
}

\renewcommand{\shorttitle}{OLion: Hadamard Ideal via Spectral and $\ell_{\infty}$ Biases}

\hypersetup{
  pdftitle={OLion: Approaching the Hadamard Ideal by Intersecting Spectral and $\ell_{\infty}$ Implicit Biases},
  pdfauthor={Authors},
  pdfkeywords={Machine Learning, Optimization, Large Language Models, NLP, Pretraining, Implicit Bias, Muon, Lion},
}

\begin{document}
\maketitle

\begin{abstract}
Many optimizers can be interpreted as steepest-descent methods under norm-induced geometries, and thus inherit corresponding implicit biases. We introduce \nameA{} (\fullname{}), which combines spectral control from orthogonalized update directions with $\ell_\infty$-style coordinate control from sign updates. \nameA{} forms a Lion-style momentum direction, approximately orthogonalizes it via a few Newton--Schulz iterations, and then applies an entrywise sign, providing an efficient approximation to taking a maximal step over the intersection of the spectral and $\ell_\infty$ constraint sets (a scaled Hadamard-like set for matrix parameters). Despite the strong nonlinearity of orthogonalization and sign, we prove convergence under a mild, empirically verified diagonal-isotropy assumption. Across large-scale language and vision training, including GPT-2 and Llama pretraining, SiT image pretraining, and supervised fine-tuning, \nameA{} matches or outperforms AdamW and Muon under comparable tuning while using only momentum-level optimizer state, and it mitigates optimizer mismatch when fine-tuning AdamW-pretrained checkpoints.
\end{abstract}

\keywords{Machine Learning \and Optimization \and Large Language Models \and NLP \and Pretraining \and Implicit Bias \and Muon \and Lion}

\noindent\textbf{Code:} \codeurl

\section{Introduction}
In addition to wall-clock efficiency and stability, the choice of optimizer shapes \emph{which} solutions are found through its \emph{implicit bias}~\cite{gunasekar2018characterizing,soudry2018implicit,li2023implicit,wang2021implicit,bernstein2018signsgd,vasudeva2024rich,lyu2020gradient}.
A useful unifying view is that many first-order methods implement (approximately) steepest descent under different norm-induced geometries, yielding different regularization effects even when the explicit objective is identical~\citep{bernstein2024old}.

\begin{figure}[tbp]
    \centering
    \includegraphics[width=\linewidth]{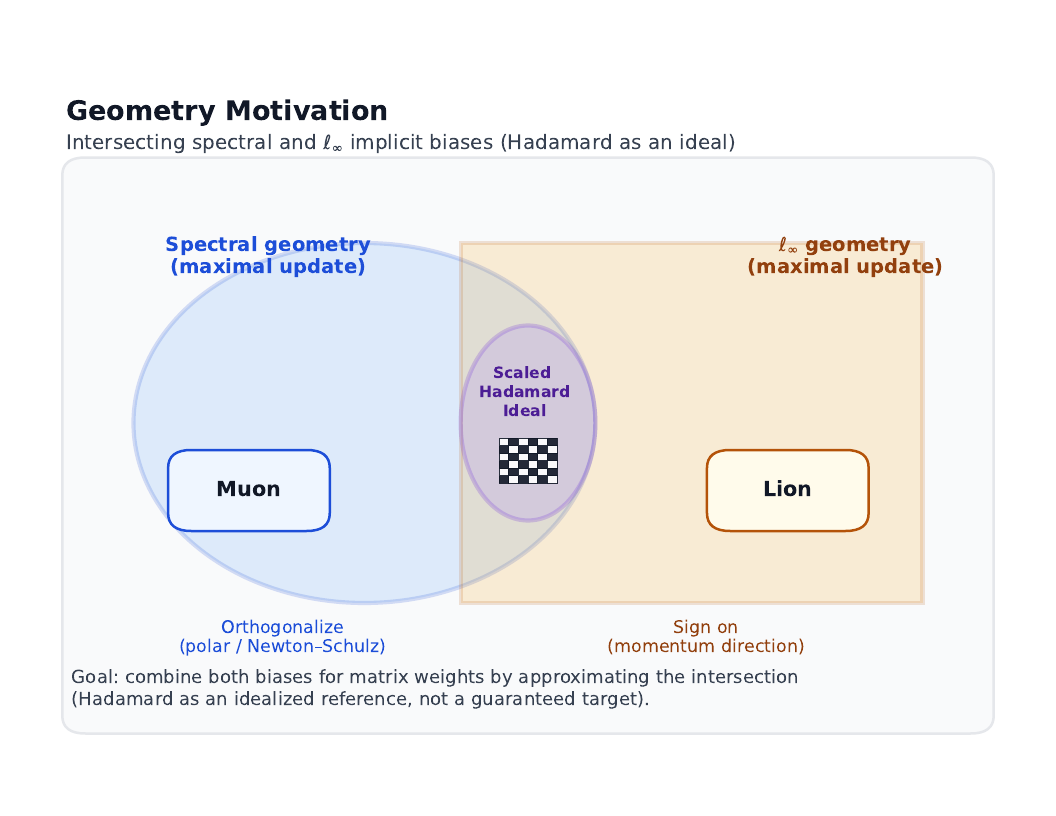}
    \caption{The geometry motivation: We view Muon and Lion as maximal-update methods under two norm-induced geometries: a spectral geometry (orthogonalization / polar factor) and an $\ell_\infty$ geometry (sign-based coordinate normalization). Their intersection suggests a scaled Hadamard set as an idealized target for matrix-shaped updates, motivating an intersection-seeking design.}
    \label{fig:geometry_motivation}
\end{figure}

Figure~\ref{fig:geometry_motivation} illustrates the geometric motivation that guides this work. Two recent optimizers highlight complementary biases. \emph{Muon} uses a structured update for matrix-shaped parameters: it orthogonalizes the momentum direction using  a small number of Newton--Schulz iterations, producing a direction-only update with a flattened singular-value profile.
From the implicit-bias viewpoint, Muon promotes a form of spectral control by limiting amplification along singular directions. In contrast, \emph{Lion} emphasizes a different geometry: it applies an element-wise sign to a momentum direction. The sign operation caps each coordinate’s contribution and behaves like steepest descent under an $\ell_\infty$ constraint.

These developments motivate a natural question.
Muon provides strong \emph{global} control through spectral normalization but lacks element-wise normalization; Lion provides strong \emph{coordinate-wise} control but does not enforce spectral structure. 
\emph{Can we obtain the benefits of both spectral and $\ell_\infty$ implicit biases in a single optimizer?} 
A related practical motivation is the \emph{pretrain--fine-tuning mismatch}: many widely used pretrained checkpoints were produced with optimizers that include element-wise normalization (e.g., AdamW-like behavior), and optimizer mismatch can measurably affect fine-tuning dynamics. For example, recent studies show that Muon fine-tuning of models pretrained with Adam performs worse than its Adam counterpart~\cite{liu2025muon}.

We approach this question through the geometry of intersecting constraints (Figure~\ref{fig:geometry_motivation}). 
For matrix parameters, Muon’s orthogonalization encourages updates that lie near a scaled orthogonal set (a spectral-norm geometry), while Lion’s sign update encourages proximity to the $\ell_\infty$ extreme set (entries with equal magnitude).
For square matrices, the intersection of these two structures corresponds to a scaled \emph{Hadamard set}, i.e., matrices with orthogonal rows/columns and entries $\pm 1/\sqrt{d}$.
Although the Hadamard ideal may not be literally solvable, the perspective is useful: composing an orthogonalization map with a sign map resembles an approximate intersection-seeking procedure.

Motivated by the view, we propose \nameA{} (\fullname{}), which integrates Muon’s Newton--Schulz orthogonalization with a Lion-style sign update.
At each step, \nameA{} forms a momentum direction, orthogonalizes this direction, and then applies an element-wise sign to obtain a coordinate-normalized update. In the implementation, we optionally apply a lightweight magnitude alignment (e.g., Root Mean Squares (RMS) scaling)  stabilizes effective step sizes across layers and tensor shapes. 
As a result, \nameA{} preserves Muon’s memory efficiency while incorporating the practical benefits of sign-based updates.

\begin{figure}[tbp]
    \centering
    \begin{subfigure}{0.48\linewidth}
        \centering
        \includegraphics[width=\linewidth]{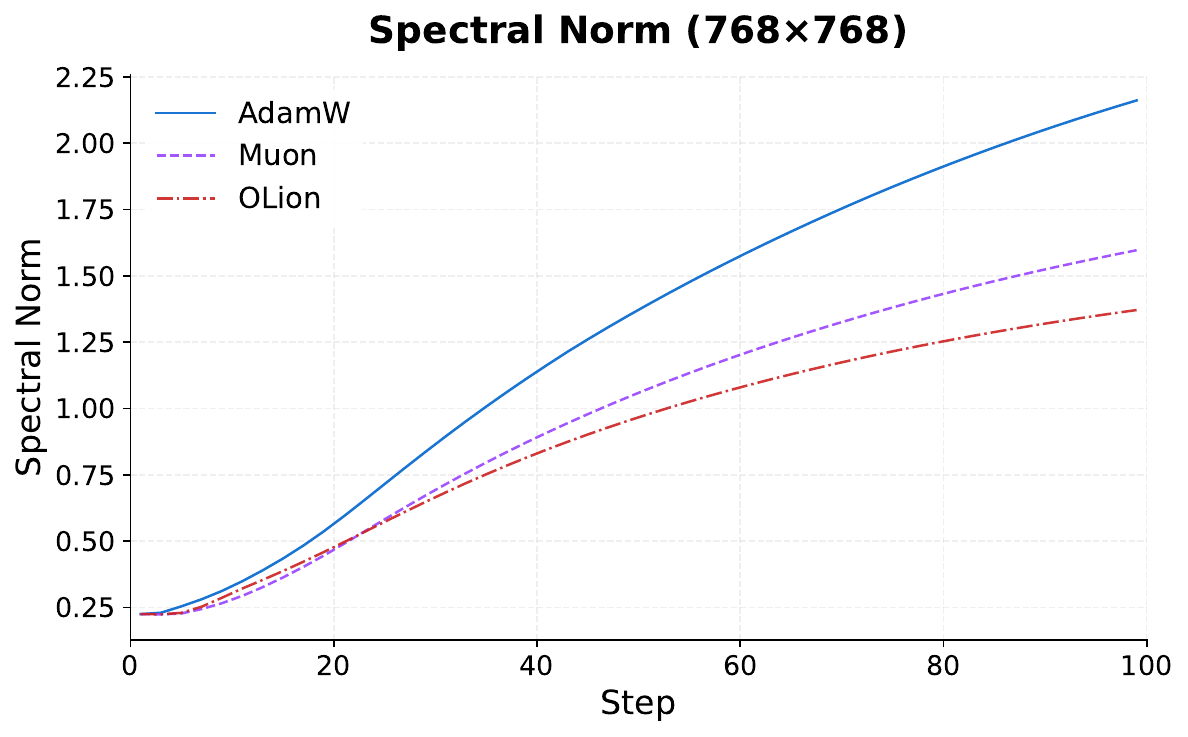}
        \caption{}
        \label{fig:subfig1}
    \end{subfigure}
    \hfill
    \begin{subfigure}{0.48\linewidth}
        \centering
        \includegraphics[width=\linewidth]{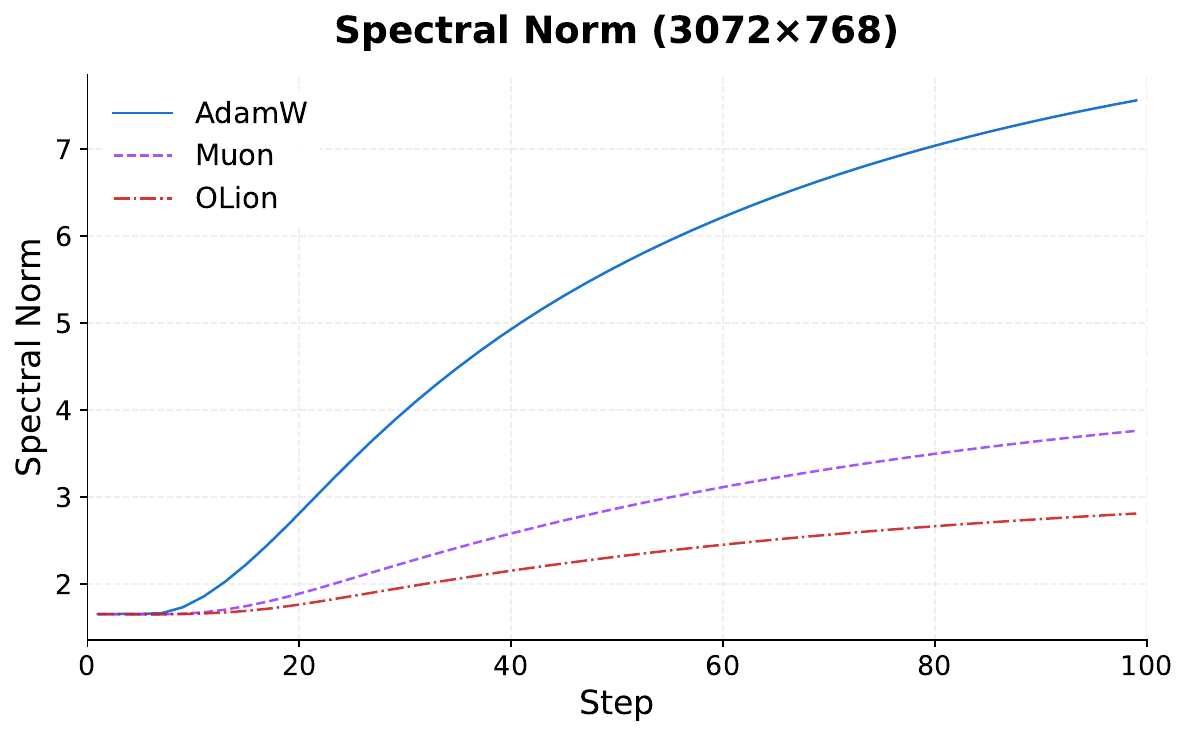}
        \caption{}
        \label{fig:subfig2}
    \end{subfigure}
    
    \vspace{0.8em}
    
    \begin{subfigure}{0.48\linewidth}
        \centering
        \includegraphics[width=\linewidth]{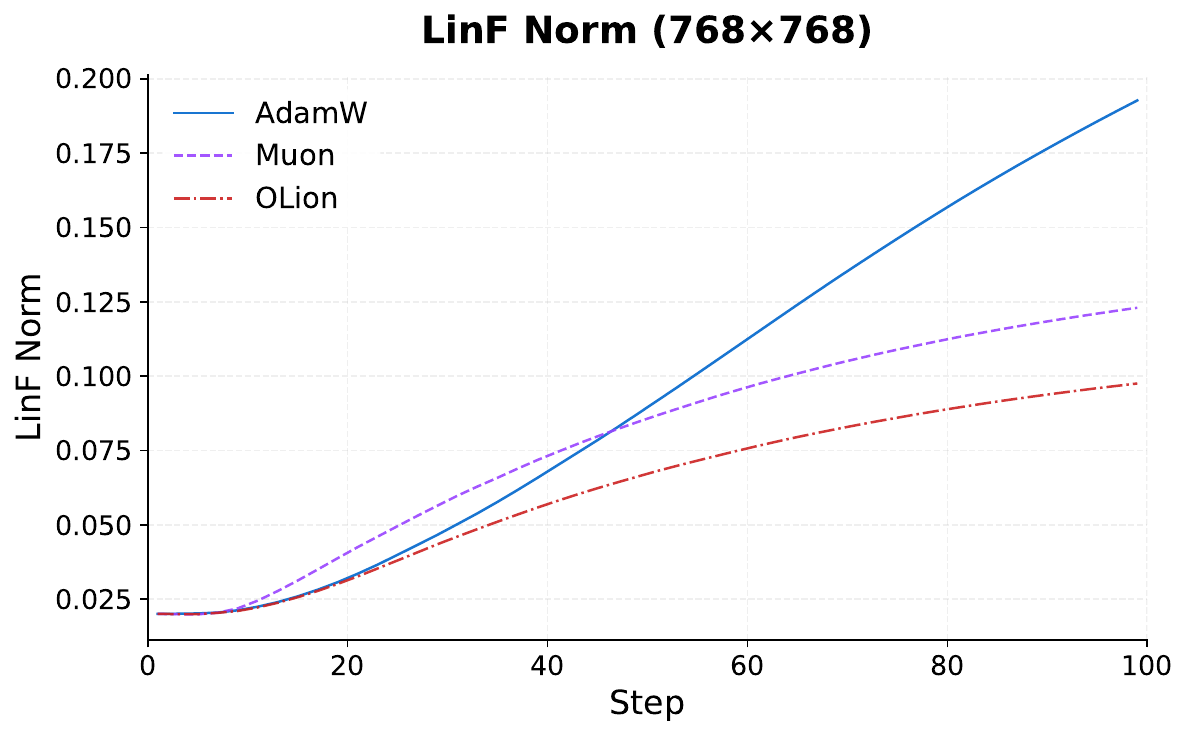}
        \caption{}
        \label{fig:subfig3}
    \end{subfigure}
    \hfill
    \begin{subfigure}{0.48\linewidth}
        \centering
        \includegraphics[width=\linewidth]{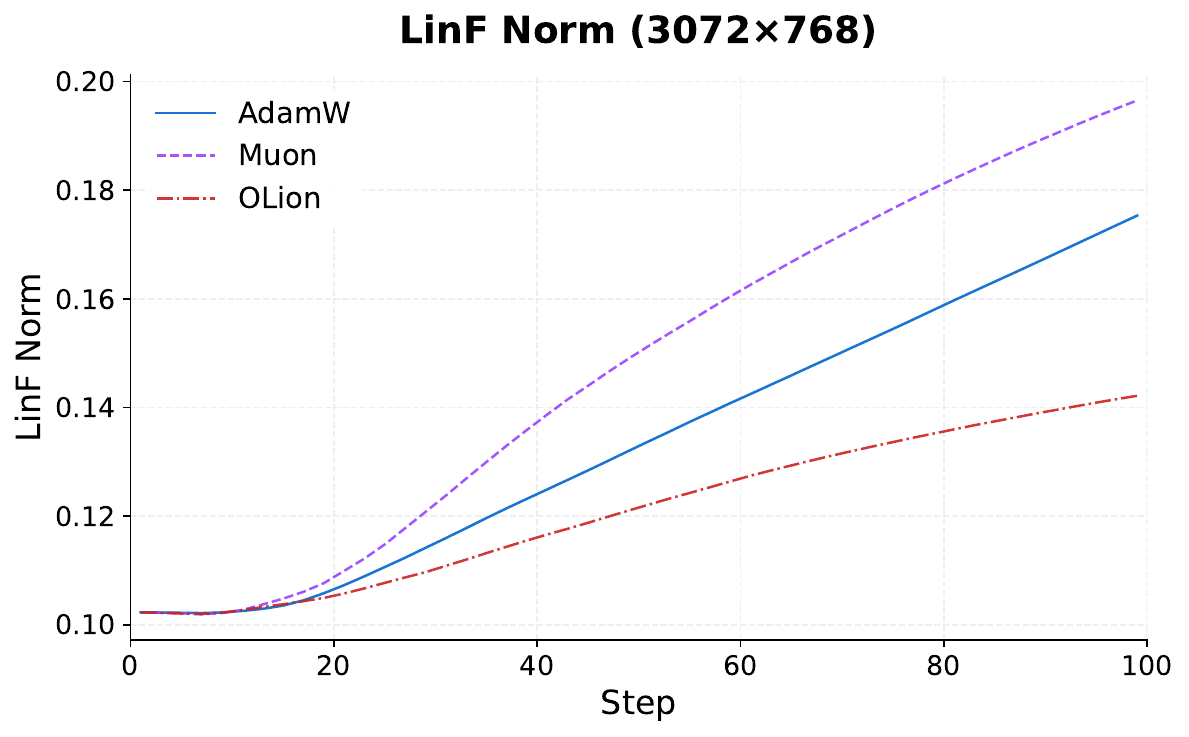}
        \caption{}
        \label{fig:subfig4}
    \end{subfigure}
    \caption{Evolution of spectral and $\ell_\infty$ norms during GPT-2 small pretraining for weight matrices of different shapes. Adam favors a small $\ell_\infty$ norm, Muon favors a small spectral norm, while \nameA{} simultaneously exhibits both implicit biases.}
    \label{fig:spectral-infty-norm}
\end{figure}


Beyond optimization, the sign operation in \nameA{} brings two practical systems-level benefits.
First, sign updates naturally support communication-efficient multi-node training: when the update direction is reduced to its element-wise sign, the communicated message can be heavily compressed (e.g., to 1-bit per entry, optionally with a shared scale), which can reduce bandwidth pressure in distributed data-parallel settings~\citep{liu2024communication}. 
Second, the $\ell_\infty$-style implicit bias induced by sign updates promotes more uniform coordinate magnitudes, leading to weight and update tensors with a flatter entrywise distribution.  Such uniformity is desirable for low-precision deployment, as quantization error is often dominated by a few large-magnitude outliers; suppressing these outliers can improve effective quantization fidelity at a fixed bit-width~\citep{chee2023quip,tseng2024quipsharp,liu2024spinquant,ashkboos2024quarot}. 

A simple experiment indeed confirms the intended bias intersection: \nameA{} maintains both a small spectral norm and a small $\ell_\infty$ norm during training, whereas other optimizers favor only one of the two, as shown in Figure~\ref{fig:spectral-infty-norm}.


\paragraph{Contributions.}
Our contributions span formulation, algorithm design, theory, and large-scale evaluation.
First, we provide a norm-geometric formulation that explicitly intersects spectral control (via orthogonalization) with $\ell_\infty$-style coordinate control (via sign), yielding a Hadamard-type ideal for matrix-shaped updates.
Second, guided by this formulation, we design \nameA{} as an effective and efficient optimizer: it applies sign-after-orthogonalization to a Lion-style momentum direction, with stable default scaling rules.
Third, we establish convergence guarantees under a new assumption, supported by theoretical justification and empirical verification.
Finally, we validate \nameA{} at scale across text and vision and across training stages, including GPT-2 and Llama pretraining, SiT image pretraining, and Llama supervised fine-tuning, showing consistent improvements over strong baselines.

\section{Related Work}
Implicit bias, i.e., how the optimizer’s geometry selects solutions among many minimizers, has been an essential lens for understanding generalization in over-parameterized models \cite{soudry2018implicit,gunasekar2018characterizing,lyu2020gradient,vasudeva2024rich,zhang2024implicitbiasadamseparable,wang2021implicit,wang2021momentum}.
Our work leverages this spirit by combining two complementary geometries \emph{within a single update}:
spectral control via orthogonalization and $\ell_\infty$-style coordinate control via sign.
While prior work has studied each geometry in isolation (and sometimes contrasted them), \nameA{} is motivated by their intersection for matrix-shaped parameters, which corresponds to a  Hadamard-type ideal.


\subsection{Spectral geometry and orthogonalized updates}
\textbf{Orthogonality-constrained optimization.}
Optimization over the Stiefel/orthogonal manifolds is a classical topic in matrix manifold optimization.
Many first-order methods proceed by taking an ambient descent step and then projecting (or retracting) back to the feasible set \cite{absil2008optimization,edelman1998geometry,wen2013feasible,gao2018framework,li2020cayley}.
A standard retraction uses the polar factor: for full-rank $\mA$, the orthogonal factor $\mathcal{O}(\mA)=\mA(\mA^\top \mA)^{-1/2}$ (equivalently $\mU\mV^\top$ from the SVD) is the closest orthogonal matrix to $\mA$ in Frobenius norm \cite{higham1986polar}.
This polar-based retraction is widely used in practical algorithms and often enables clean descent arguments under smoothness assumptions \cite{absil2008optimization,wen2013feasible,gao2018framework}.

\textbf{Optimizers as steepest descent under norm-induced geometries.}
Recent frameworks reinterpret modern optimizers as (approximate) steepest descent under different norm constraints \cite{bernstein2024old}.
Within this view, Shampoo \cite{gupta2018shampoo} can be seen as operating in a spectral geometry via preconditioning, and Muon \cite{jordan2024muon} explicitly orthogonalizes momentum updates using Newton--Schulz iterations, yielding direction-only updates.
Muon has been shown to be effective and memory-efficient for large-scale training \cite{liu2025muon}.

\textbf{Muon variants.}
Several follow-up works extend Muon in different directions.
Muon-MVR~\cite{chang2025convergence} improves stochastic convergence by adding momentum-based variance reduction.
MuonMax~\cite{goldstein2025exploration} targets robustness to learning-rate selection by aggregating norms across layers.
Turbo-Muon~\cite{boissin2025turbo} reduces the overhead of orthogonalization, while 8-bit Muon~\cite{gupta2025effective} targets optimizer-state memory. AdaMuon~\cite{si2025adamuon} augments orthogonal updates with element-wise second-moment statistics; NorMuon~\cite{li2025normuon} introduces neuron-level adaptive learning rates alongside orthogonalization. 
MuonAll~\cite{page2025muonall} removes the hybrid-training dependency by reshaping 1D parameters (e.g., biases, normalization parameters) into 2D forms so that a spectral update rule can be applied uniformly. 

\subsection{$\ell_\infty$ geometry and sign-based updates}

\paragraph{Coordinate-wise adaptivity and Adam-style methods.} Adam~\cite{kingma2014adam,loshchilov2019adamw} and related diagonal preconditioned methods (e.g., AdaGrad~\cite{duchi11a}, RMSProp) normalize gradients coordinate-wise through an adaptive diagonal scaling. In norm-geometric terms, diagonal preconditioning  can be interpreted as steepest descent under a coordinate-weighted $\ell_\infty$ geometry \cite{bernstein2024old,xie2024implicit}. 
A practical limitation is optimizer-state memory due to second-moment statistics, motivating memory-efficient variants that approximate or compress these estimates, including Adafactor~\cite{shazeer2018adafactor}, SM3~\cite{anil2019memory}, CAME~\cite{luo2023came}, AdaLomo~\cite{lv2023adalomo}, and blockwise schemes such as Adam-mini~\cite{zhang2024adam}, as well as recent progress that remove explicit second-moment state such as AdamS~\cite{zhang2025adams}.

\textbf{Sign and quantized updates.}
Sign-based updates are also a classical extreme in gradient compression and quantization. 
\textsc{signSGD}~\cite{bernstein2018signsgd} replaces gradients by their entry-wise signs, yielding strong communication savings together with convergence guarantees under standard stochastic assumptions.
The majority-vote variant improves robustness in multi-worker settings~\cite{bernstein2019signsgdmajority}.
More broadly, quantized-gradient methods and error-feedback mechanisms provide principled ways to compress updates while retaining convergence \cite{alistarh2017qsgd,karimireddy2019errorfeedback,bernstein2018signsgd,bernstein2019signsgdmajority}.

\textbf{Lion and $\ell_\infty$ implicit bias.}
The Lion  (Evolved Sign Momentum) optimizer \cite{chen2023symbolic} combines a lightweight momentum mechanism with a sign update and removes Adam’s second-moment accumulator, reducing optimizer-state memory and often improving throughput. 
Follow-up theory connects Lion to constrained optimization and $\ell_\infty$ geometry interpretations~\cite{chen2025lionsecretlysolvesconstrained} and \cite{sfyraki2025lions} unifies Lion and Muon under a stochastic Frank--Wolfe viewpoint, where Lion corresponds to an $\ell_\infty$ geometry while Muon operates on a spectral ball.
Convergence analyses for Lion in non-convex stochastic settings and distributed regimes have also been developed \cite{jiang2025convergence}, and smooth variants such as RLion replace $\operatorname{sign}(\cdot)$ with a smooth transformation to mitigate potential instability \cite{rong2025refined}.

\paragraph{Positioning of OLion.} OLion is motivated by the intersection of two geometries, and the corresponding feasible update sets for matrix-shaped parameters lead to a Hadamard-type ideal. OLion approaches such intersection in a single update while retaining a momentum-level memory footprint.

\newcommand{\ip}[2]{\left\langle #1,#2\right\rangle}
\newcommand{\tr}{\mathrm{tr}}
\newcommand{\diag}{\mathrm{diag}}
\newcommand{\sym}{\mathrm{sym}}
\newcommand{\Oop}{\mathcal{O}}
\newcommand{\normF}[1]{\left\|#1\right\|_F}
\newcommand{\normN}[1]{\left\|#1\right\|_*} 


\section{Preliminaries}
\label{sec:prelim}

\paragraph{Notation.}
We consider a matrix parameter $\mX_t\in\R^{d_1\times d_2}$ with $d_1\ge d_2$.
Let $\mG_t:=\nabla f(\mX_t)$ denote the (full) gradient at step $t$, and let
$\widetilde{\mG}_t$ denote the update signal actually used by the optimizer
(e.g., a momentum/Nesterov combination of mini-batch gradients).
We use the Frobenius inner product $\ip{\mA}{\mB}:=\tr(\mA^\top \mB)$.
For models with multiple matrix-valued parameters, we write
$\overrightarrow{\mX}:=\{\mX_{(1)},\mX_{(2)},\ldots,\mX_{(L)}\}$.
Since the update rule is applied separately to each parameter block and the analysis is also separable across matrices, we present the algorithm and theory for a single matrix for clarity; extending the statements to
$\overrightarrow{\mX}$ is straightforward but requires heavier notation.

\paragraph{Update geometry and implicit bias.}
Many first-order optimizers can be viewed as choosing an update direction according to a particular update geometry,
which in turn induces an implicit bias on the solutions reached. In this work, we focus on two geometries that are
especially natural for matrix-shaped parameters: spectral control (orthogonalized directions) and $\ell_\infty$-style
coordinate control (sign directions).

\paragraph{Two structured sets and the Hadamard ideal (tall case).}
To combine these two biases for $\mX\in\R^{d_1\times d_2}$ with $d_1\ge d_2$, we define
\begin{align}
\mathcal{A}
&:= \left\{\mX\in\R^{d_1\times d_2}:\ \mX^\top \mX=\mI_{d_2}\right\},
\\
\mathcal{B}
&:= \left\{\mX\in\R^{d_1\times d_2}:\ \mX_{ij}\in\left\{\pm \frac{1}{\sqrt{d_1}}\right\}\right\}.
\end{align}
The set $\mathcal{A}$ captures the spectral extreme (column-orthonormal updates), while $\mathcal{B}$ captures the
$\ell_\infty$ extreme (uniform-magnitude, sign-pattern updates). Their intersection
$\mathcal{A}\cap\mathcal{B}$ is the set of (scaled) partial Hadamard matrices (when they exist), i.e., matrices with
orthonormal columns and entries $\pm 1/\sqrt{d_1}$. We refer to this intersection as the \emph{Hadamard ideal}.



\section{Method: Orthogonal Lion}
\label{sec:method}

Let $\widetilde{\mG}_t\in\R^{d_1\times d_2}$ denote the matrix-shaped update signal at step $t$
(e.g., a momentum-smoothed gradient; see Algorithm~\ref{alg:olion}).
Guided by the Hadamard ideal introduced in Section~\ref{sec:prelim}, we seek an update direction that is
simultaneously close to the spectral extreme $\mathcal{A}$ and the $\ell_\infty$ extreme $\mathcal{B}$.
A natural idealized target is the maximizer of the linear score over the intersection:
\begin{equation}
\mX_t^\star \in \arg\max_{\mX\in \mathcal{A}\cap\mathcal{B}} \;\ip{\mX}{\widetilde{\mG}_t}.
\label{eq:intersection_proj}
\end{equation}
Since $\mathcal{A}\cap\mathcal{B}$ is highly nonconvex, solving Equation~(\ref{eq:intersection_proj}) exactly is intractable.
Instead, we approximate it by using alternating-projection steps
onto $\mathcal{A}$ and $\mathcal{B}$, which yields a simple and efficient update rule.

\paragraph{Projection onto $\mathcal{A}$ (Orthogonalization).}
The maximizer of $\ip{\mX}{\mZ}$ over $\mX\in\mathcal{A}$ is given by:
\begin{equation}
P_{\mathcal{A}}(\mZ)
\in \arg\max_{\mX\in\mathcal{A}} \ip{\mX}{\mZ}
\,\Longrightarrow\,
P_{\mathcal{A}}(\mZ)=\Oop(\mZ),
\label{eq:projA}
\end{equation}
where if $\mZ=\mU\boldsymbol{\Sigma}\mV^\top$ is the (thin) SVD, then $\Oop(\mZ)=\mU\mV^\top$ and
$\Oop(\mZ)^\top \Oop(\mZ)=\mI_{d_2}$.

\paragraph{Projection onto $\mathcal{B}$ (entrywise sign).}
Likewise, maximizing $\ip{\mX}{\mZ}$ over $\mX\in\mathcal{B}$ decouples entrywise and yields
\begin{equation}
P_{\mathcal{B}}(\mZ)
\in \arg\max_{\mX\in\mathcal{B}} \ip{\mX}{\mZ}
\,\Longrightarrow\,
P_{\mathcal{B}}(\mZ)=\frac{\sign(\mZ)}{\sqrt{d_1}},
\label{eq:projB}
\end{equation}
where $\sign(\cdot)$ is applied entrywise.

\subsection{The OLion Algorithm}
\label{sec:olion_alg}

\begin{algorithm}[H]
\caption{\nameA{} Optimizer}
\label{alg:olion}
\begin{algorithmic}
\REQUIRE Learning rate $\eta_t$, $\beta_1$, $\beta_2$, weight decay $\lambda$, Newton--Schulz steps $K$
\REQUIRE Initial parameters $\mathbf{X}_0$, momentum  $\mathbf{M}_0 = \mathbf{0}$
\FOR{$t = 0, 1, \ldots, T-1$}
    \STATE Compute mini-batch gradient $\mathbf{g}_t$ \hfill (for $\mathbf{G}_t=\nabla f(\mathbf{X}_t)$)
    \STATE \textbf{Momentum:} $\mathbf{M}_t = \beta_2 \mathbf{M}_{t-1} + (1-\beta_2)\mathbf{g}_t$
    \STATE \textbf{Nesterov mix:} $\widetilde{\mathbf{G}}_t = (1-\beta_1)\mathbf{g}_t + \beta_1 \mathbf{M}_t$
    \STATE \textbf{Orthogonalize:} $\mathbf{Q}_t = \textsc{NewtonSchulz}(\widetilde{\mathbf{G}}_t, K)$
    \STATE \textbf{Sign operation:} $\mathbf{S}_t = \sign(\mathbf{Q}_t)$
    \STATE \textbf{RMS alignment:} $\mathbf{D}_t = \gamma_t \mathbf{S}_t$
    \STATE \textbf{Update:} $\mathbf{X}_{t+1} = \mathbf{X}_t - \eta_t \mathbf{D}_t - \lambda\eta_t \mathbf{X}_t$
\ENDFOR
\end{algorithmic}
\end{algorithm}

\paragraph{\nameA{} direction (sign-after-orthogonalization).}
Composing two projections Equation~(\ref{eq:projA}) and Equation~(\ref{eq:projB}) gives the \nameA{} direction:
\begin{equation}
P_{\mathcal{B}}\!\left(P_{\mathcal{A}}(\widetilde{\mG}_t)\right)
\;=\;
\frac{1}{\sqrt{d_1}}\,\sign\!\left(\Oop(\widetilde{\mG}_t)\right),
\label{eq:olion_core}
\end{equation}
which can be viewed as a one-step approximation to solve Equation~(\ref{eq:intersection_proj}).
Intuitively, $\Oop(\widetilde{\mG}_t)$ enforces spectral structure by normalizing singular values, and the
subsequent sign enforces coordinate-wise normalization, producing an update direction that simultaneously reflects
both implicit biases.

For convenience and consistency with Algorithm~\ref{alg:olion}, we introduce $\mS_t := \sign(\Oop(\widetilde{\mG}_t))$. We next introduce other components in Algorithm~\ref{alg:olion}.

\paragraph{Multi-timescale momentum.}
Algorithm~\ref{alg:olion} first constructs the update signal as follows, which is the same as Lion,
\[
\mathbf{M}_t=\beta_2 \mathbf{M}_{t-1}+(1-\beta_2)\mathbf{g}_t,\qquad
\widetilde{\mathbf{G}}_t=(1-\beta_1)\mathbf{g}_t+\beta_1\mathbf{M}_t.
\]
The buffer $\mathbf{M}_t$ aggregates longer-horizon history controlled by the ``slow'' coefficient $\beta_2$,
while the mixing coefficient $\beta_1$ ensures the current mini-batch gradient $\mathbf{g}_t$ contributes
a fixed fraction $(1-\beta_1)$, preserving responsiveness.

\paragraph{Root Mean Square (RMS) alignment.} Since $\mathbf{S}_t$ has fixed-magnitude entries, implementations often apply a scalar alignment to stabilize
effective step sizes across layers and shapes. In Algorithm~\ref{alg:olion}, one option is RMS alignment:
$\mathbf{D}_t := \gamma_t \mathbf{S}_t$. 
Specifically, to maintain compatibility with Adam’s learning rate schedules, we scale the RMS norm of the final update to match Adam's empirical RMS value of $\approx 0.2$~\cite{liu2025muonscalablellmtraining}, yielding
\begin{equation}
\label{eq:adamuon_rms}
\gamma_t = \frac{0.2}{\text{RMS}({\mathbf{S}}_t)} = \frac{0.2 \sqrt{d_1d_2}}{\|{\mathbf{S}}_t\|_F} \approx 0.2,
\end{equation}
where $d_1$ and $d_2$ denote the number of rows and columns of the matrix ${\mathbf{S}}_t$, respectively.

\subsection{Convergence Analysis of OLion}
\label{sec:theory}

To establish the convergence analysis of OLion, we first introduce some standard assumptions.

\begin{assumption}[Smoothness and lower boundedness]
\label{ass:smooth}
The function $f:\mathbb{R}^{d_1\times d_2}\to\mathbb{R}$ is differentiable and
$L$--smooth with respect to $\normF{\cdot}$, i.e., for all $\mathbf{X},\mathbf{Y}$,
\begin{equation}
\label{eq:smooth}
f(\mathbf{Y})\le f(\mathbf{X})+\ip{\nabla f(\mathbf{X})}{\mathbf{Y}-\mathbf{X}}+\frac{L}{2}\normF{\mathbf{Y}-\mathbf{X}}^2.
\end{equation}
Moreover, $f$ is bounded below: $f(\mathbf{X})\ge f_{inf}$ for all $\mathbf{X}$.
\end{assumption}

Because the sign-after-orthogonalization operation is highly nonlinear, a direct descent analysis for OLion is generally impossible without leveraging additional structure in the gradient. To make the analysis tractable, we impose the following assumption on the singular-vector geometry of the update signal. Throughout the convergence proof, we assume that the $\widetilde{\mathbf{G}}_t$ used in Algorithm~\ref{alg:olion} satisfies this condition at every iteration.

\begin{assumption}[Diagonal-isotropy decomposition]
\label{assm:diag-balance}
For a rank-$r$ matrix $\mathbf{Z}$ with singular value decomposition form $\mathbf{Z}=\mathbf{U}\boldsymbol{\Sigma}\mathbf{V}^\top$, with $\mathbf{U}\in\mathbb{R}^{d_1\times r},\ 
\mathbf{V}\in\mathbb{R}^{d_2\times r},\ 
\boldsymbol{\Sigma}=\mathrm{diag}(\sigma_{1},\ldots,\sigma_{r})\succeq 0$. 
 There exists some $\varepsilon\ge 0$ such that,
\begin{equation}
\Big\|
\mathrm{diag}\big(\mathbf{U}^T\sign(\mathbf{U}\mathbf{V}^\top)\mathbf{V}\big)
-\frac{\|\mathbf{U}\mathbf{V}^\top\|_1}{r}\mathbf{1}
\Big\|_2
\ \le\
\varepsilon\,\frac{\|\mathbf{U}\mathbf{V}^\top\|_1}{\sqrt r}. \nonumber
\end{equation}
\end{assumption}

\paragraph{Intuition for diagonal isotropy.}
Let $\mathbf{S} = \sign(\mathbf{U}\mathbf{V}^\top)$. Define
\[
m_k := (\mathbf{U}^\top\mathbf{S}\mathbf{V})_{kk} = \mathbf{u}_k^\top \mathbf{S} \mathbf{v}_k = \langle \mathbf{u}_k \mathbf{v}_k^\top,\; \mathbf{S}\rangle,\quad k \in [r],
\]
which measures how strongly the sign pattern $\mathbf{S}$ correlates with the $k$-th rank-one singular-direction component $\mathbf{u}_k \mathbf{v}_k^\top$.
Note that
\[
\sum_{k=1}^r m_k = \operatorname{tr}(\mathbf{U}^\top \mathbf{S}\mathbf{V})=\langle \mathbf{U}\mathbf{V}^\top,\; \mathbf{S}\rangle=\|\mathbf{U}\mathbf{V}^\top\|_1,
\]
so the average correlation is $\bar m = \|\mathbf{U}\mathbf{V}^\top\|_1/r$.

Therefore, Assumption~\ref{assm:diag-balance} indicates that the $m_k$'s are nearly constant across $k$ (i.e., $\mathrm{diag}(\mathbf{U}^\top\mathbf{S}\mathbf{V})\approx \bar m\,\mathbf{1}$). The diagonal-isotropy assumption rules out situations where the sign pattern $\mathbf{S}$ aligns much more strongly with some singular directions than others.

We note that random matrices satisfy  Assumption~\ref{assm:diag-balance} for $\varepsilon=o(1)$ with high probability as the matrix dimension gets larger. We formally state the results in Appendix~\ref{app:random-diagonal-isotropy}. We also empirically verify that along the training trajectory, the $\widetilde{\mathbf{G}}_t$ satisfy Assumption 4.2 with small values of $\varepsilon$ (see Appendix~\ref{app:diagonal-isotropy-verification}).

Under Assumption~\ref{assm:diag-balance}, we can prove an important lemma as follows.

\begin{lemma}[Cancellation-aware upper bound]\label{lem:upper-bound}
For a rank-$r$ matrix $\mZ$ with singular value decomposition form $\mZ = \mU\boldsymbol{\Sigma}\mV^\top$ with $\mathbf{U}\in\mathbb{R}^{d_1\times r},\ 
\mathbf{V}\in\mathbb{R}^{d_2\times r},\ 
\boldsymbol{\Sigma}=\mathrm{diag}(\sigma_{1},\ldots,\sigma_{r})\succeq 0$. Suppose that Assumption~\ref{assm:diag-balance} holds. Let $\alpha :=\tr(\boldsymbol{\Sigma})/r$. Then
\[
\Big|\big\langle \mZ-\alpha \mU\mV^\top,\ \sign(\mU\mV^\top)\big\rangle\Big|
\ \le\
\varepsilon\,\frac{\|\mU\mV^\top\|_1}{\sqrt{r}}\,
\Big\|\boldsymbol{\Sigma}-\alpha \mI\Big\|_F.
\]
\end{lemma}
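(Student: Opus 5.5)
We plan to expand the inner product in the singular basis and reduce it to a statement about the diagonal vector $\mathbf{m}:=\mathrm{diag}(\mU^\top\mS\mV)$, where $\mS:=\sign(\mU\mV^\top)$.

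\textbf{Step 1: rewrite the inner product.} Write $\mZ-\alpha\mU\mV^\top=\mU(\boldsymbol{\Sigma}-\alpha\mI)\mV^\top$. Then
\[
\big\langle \mZ-\alpha\mU\mV^\top,\ \mS\big\rangle=\tr\big((\boldsymbol{\Sigma}-\alpha\mI)\,\mU^\top\mS\mV\big)=\sum_{k=1}^r(\sigma_k-\alpha)\,m_k ,
\]
where $m_k=(\mU^\top\mS\mV)_{kk}$. The second equality holds because $\boldsymbol{\Sigma}-\alpha\mI$ is diagonal, so only the diagonal of $\mU^\top\mS\mV$ enters the trace. With $\mathbf{d}:=(\sigma_k-\alpha)_{k=1}^r$, the quantity is therefore $\langle \mathbf{d},\mathbf{m}\rangle$.

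\textbf{Step 2: exploit cancellation.} By the choice $\alpha=\tr(\boldsymbol{\Sigma})/r$, we have $\sum_k(\sigma_k-\alpha)=0$, i.e.\ $\mathbf{d}\perp\mathbf{1}$. Hence, for $\bar m=\|\mU\mV^\top\|_1/r$,
\[
\langle\mathbf{d},\mathbf{m}\rangle=\langle\mathbf{d},\mathbf{m}-\bar m\mathbf{1}\rangle .
\]
This subtraction of the mean is the key step: it is precisely why the lemma uses the average singular value $\alpha$.

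\textbf{Step 3: apply Cauchy--Schwarz and the assumption.} Cauchy--Schwarz gives $|\langle\mathbf{d},\mathbf{m}-\bar m\mathbf{1}\rangle|\le\|\mathbf{d}\|_2\,\|\mathbf{m}-\bar m\mathbf{1}\|_2$. Here $\|\mathbf{d}\|_2=\|\boldsymbol{\Sigma}-\alpha\mI\|_F$, and Assumption~\ref{assm:diag-balance} bounds $\|\mathbf{m}-\bar m\mathbf{1}\|_2$ by $\varepsilon\|\mU\mV^\top\|_1/\sqrt r$. Combining the two factors yields exactly the stated bound.

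None of the steps poses a real difficulty. The only points needing care are that the trace reduces to diagonal entries alone, and that $\bar m$ in the assumption coincides with the mean $\sum_k m_k/r$, using the identity $\sum_k m_k=\|\mU\mV^\top\|_1$ noted in the intuition paragraph.
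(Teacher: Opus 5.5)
Your proposal is correct and follows the paper's proof essentially step for step. Both rewrite the inner product as $\sum_k(\sigma_k-\alpha)m_k$, use $\sum_k(\sigma_k-\alpha)=0$ to center $\mathbf{m}$ at $\|\mU\mV^\top\|_1/r$, and then apply Cauchy--Schwarz and Assumption~\ref{assm:diag-balance}. One small remark: since $\mathbf{d}\perp\mathbf{1}$, subtracting any multiple of $\mathbf{1}$ from $\mathbf{m}$ leaves the inner product unchanged, so the identity $\sum_k m_k=\|\mU\mV^\top\|_1$ (which the paper also checks via cyclicity of the trace) is not actually needed to match the assumption's centering.
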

\begin{proof}
    The proof is deferred to Appendix~\ref{app:proof-lemma-upperbound}.
\end{proof}

We note that a naïve bound to control $\ip{\mathbf{Z}-\alpha\mathbf{Q}}{\mathbf{S}}$ is given by Cauchy-Schwarz:
\[
\big|\ip{\mathbf{Z}-\alpha\mathbf{Q}}{\mathbf{S}}\big|
\le \|\mathbf{Z}-\alpha\mathbf{Q}\|_F\,\|\mathbf{S}\|_F,
\]
but this bound ignores two key structural facts: (i) $\alpha=\tr(\boldsymbol{\Sigma})/r$ makes
$\boldsymbol{\Sigma}-\alpha\mathbf{I}$ \emph{trace} 0, so its positive and negative diagonal deviations
must cancel, and (ii) $\mathbf{S}=\sign(\mathbf{Q})$ is not an arbitrary matrix, but is tightly coupled
to $\mathbf{Q}=\Oop(\mathbf{Z})$. In fact, Lemma~\ref{lem:upper-bound} exploits these structures and the upper bound can be orders of magnitude tighter than Cauchy--Schwarz bound when $\mathbf{Q}$ is dense
(large $\|\mathbf{Q}\|_1$) and the diagonal correlations are nearly uniform.

Conceptually, the estimate shows that the ``sign-after-orthogonalization'' direction behaves almost as if $\mathbf{Z}$ were replaced by the isotropic surrogate $\alpha\mathbf{Q}$, with an error controlled by the singular-value spread $\|\boldsymbol{\Sigma}-\alpha\mathbf{I}\|_F$ and the (small) imbalance of diagonal correlations. As a result, the descent analysis depends on meaningful geometric quantities (spectral spread and sign-balance), rather than a pessimistic worst-case bound that treats $\mathbf{S}$ as arbitrary.

\paragraph{A clean core convergence analysis.}
Algorithm~\ref{alg:olion} contains several implementation components (mini-batch gradient, multi-timescale momentum,
and optional RMS alignment) that are useful in practice. However, for the purpose of exposing the main geometric idea behind \nameA{} while avoiding heavy notation overloading
(e.g., distinguishing $\mG_t$, $\mM_t$, $\widetilde{\mG}_t$, and their corresponding orthogonalized/sign variants),
we first present the convergence analysis in the deterministic full-gradient setting without momentum and stochasticity. Specifically, in the full-gradient setting, we use $\mG_t:=\nabla f(\mX_t)$ and define
\begin{flalign}
&\mQ_t := \Oop(\mG_t),\quad
\mS_t := \sign(\mQ_t), \\
&\mX_{t+1} \leftarrow \mX_t - \eta_t \mS_t.
\end{flalign}
The scalar alignment factor (e.g.\ RMS matching) in Algorithm~\ref{alg:olion} can be absorbed into the effective learning rate and we use a single $\eta_t$ for convenience.

\paragraph{Auxiliary quantities.}
Let $\mG_t$ have SVD $\mG_t=\mU_t \boldsymbol{\Sigma}_t \mV_t^\top$ with rank $r_t$ and
$\boldsymbol{\Sigma}_t=\diag(\sigma_{t,1},\ldots,\sigma_{t,r_t})$.
Define
\[
\alpha_t := \frac{\tr(\boldsymbol{\Sigma}_t)}{r_t},
\qquad
\rho_t := \frac{\|\boldsymbol{\Sigma}_t-\alpha_t \mI\|_F}{\alpha_t \sqrt{r_t}}.
\]
Define the OLion stationarity measure
\begin{equation}
\label{eq:stat_measure}
\Phi_t \;:=\; \|\mQ_t\|_1\,\alpha_t\big(1-\varepsilon\rho_t\big),
\end{equation}
which is the quantity we aim to control.

\begin{theorem}[Descent and OLion convergence]
\label{thm:descent_stationarity_simplified}
Suppose Assumption~\ref{ass:smooth} holds. At each step $t$, assume $\mG_t=\nabla f(\mX_t)$ satisfies
Assumption~\ref{assm:diag-balance} with parameter $\varepsilon$. Consider the deterministic OLion update
$\mX_{t+1}=\mX_t-\eta_t \mS_t$ with $\mS_t=\sign(\Oop(\mG_t))$.
Then for all $t$,
\begin{equation}
\label{eq:descent_phi_simplified}
f(\mX_{t+1})
\le
f(\mX_t)
-\eta_t \Phi_t
+\frac{L}{2}\eta_t^2 d_1d_2,
\end{equation}
where $\Phi_t$ is defined in Equation~(\ref{eq:stat_measure}). Consequently, for any $T\ge 1$,
\begin{equation}
\label{eq:sum_phi_simplified}
\sum_{t=0}^{T-1}\eta_t \Phi_t
\ \le\
f(\mX_0)-f_{\inf}
+\frac{L}{2}d_1d_2\sum_{t=0}^{T-1}\eta_t^2.
\end{equation}
\end{theorem}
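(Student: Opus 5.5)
We apply the smoothness inequality of Assumption~\ref{ass:smooth} with $\mathbf{X}=\mX_t$ and $\mathbf{Y}=\mX_{t+1}=\mX_t-\eta_t\mS_t$. This gives
\[
f(\mX_{t+1})\le f(\mX_t)-\eta_t\ip{\mG_t}{\mS_t}+\frac{L}{2}\eta_t^2\normF{\mS_t}^2 .
\]
The quadratic term is immediate: $\mS_t=\sign(\mQ_t)$ has entries in $\{-1,0,1\}$, so $\normF{\mS_t}^2\le d_1d_2$. The whole task is therefore to lower bound the inner product $\ip{\mG_t}{\mS_t}$ by $\Phi_t$.

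For that inner product, write $\mG_t=\alpha_t\mQ_t+(\mG_t-\alpha_t\mQ_t)$, where $\mQ_t=\mU_t\mV_t^\top$ is the polar factor from the thin SVD of rank $r_t$. The first piece contributes exactly
\[
\alpha_t\ip{\mQ_t}{\sign(\mQ_t)}=\alpha_t\|\mQ_t\|_1 .
\]
For the second piece, Lemma~\ref{lem:upper-bound} applies directly with $\mZ=\mG_t$, using that $\mG_t$ satisfies Assumption~\ref{assm:diag-balance}. It gives
\[
\big|\ip{\mG_t-\alpha_t\mQ_t}{\mS_t}\big|\le \varepsilon\frac{\|\mQ_t\|_1}{\sqrt{r_t}}\,\|\boldsymbol{\Sigma}_t-\alpha_t\mI\|_F .
\]
By the definition of $\rho_t$, we have $\|\boldsymbol{\Sigma}_t-\alpha_t\mI\|_F/\sqrt{r_t}=\alpha_t\rho_t$, so the right-hand side equals $\varepsilon\alpha_t\rho_t\|\mQ_t\|_1$. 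Combining the two pieces,
\[
\ip{\mG_t}{\mS_t}\ge \alpha_t\|\mQ_t\|_1\big(1-\varepsilon\rho_t\big)=\Phi_t ,
\]
and substituting this into the smoothness bound yields \eqref{eq:descent_phi_simplified}.

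The degenerate case $\mG_t=0$ needs a separate remark, since there $r_t=0$ and $\alpha_t$ is undefined. We adopt the convention $\Phi_t=0$; with $\mQ_t=0$ and $\mS_t=0$, the iterate does not move and the inequality holds trivially.

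For \eqref{eq:sum_phi_simplified}, we sum the per-step inequality over $t=0,\dots,T-1$. The function values telescope, and we then use $f(\mX_T)\ge f_{\inf}$ and rearrange.

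The only substantive step is invoking Lemma~\ref{lem:upper-bound}, which the paper allows us to assume. The remaining points to check carefully are that the lemma's $\alpha$ coincides with $\alpha_t$, and that $\|\mU\mV^\top\|_1$ in the lemma is exactly $\|\mQ_t\|_1$. Both hold by definition, so I expect no real obstacle beyond this bookkeeping.
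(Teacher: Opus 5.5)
Your proposal is correct and follows the paper's proof essentially step for step: the smoothness inequality, the split $\mG_t=\alpha_t\mQ_t+(\mG_t-\alpha_t\mQ_t)$, Lemma~\ref{lem:upper-bound} with $\rho_t$ to bound the remainder, and telescoping with $f(\mX_T)\ge f_{\inf}$. Two details are slightly more careful than the paper's own bookkeeping. You use $\normF{\mS_t}^2\le d_1d_2$ rather than the paper's equality, which fails when $\mQ_t$ has zero entries under the convention $\sign(0)=0$. You also treat the degenerate case $\mG_t=0$ explicitly.
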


\paragraph{Discussion and interpretation.}
Theorem~\ref{thm:descent_stationarity_simplified} provides a standard smoothness-based descent guarantee for
deterministic OLion, but with a \emph{geometry-aware} stationarity measure
\[
\Phi_t
=\|\mQ_t\|_1\,\alpha_t\big(1-\varepsilon\rho_t\big),
\qquad
\mQ_t=\Oop(\nabla f(\mX_t)).
\]
Intuitively,  the factor $\|\mQ_t\|_1$ in $\Phi_t$ captures the ``density'' of the
singular spaces of $\mQ_t$ (Hadamard-like behavior leads to a large $\ell_1$ norm), while $\alpha_t$ measures the average singular value of $\mG_t$, and the term $(1-\varepsilon\rho_t)$ accounts for the deviation from diagonal isotropy and the spread of singular values. 

To connect this OLion-specific measure to the \emph{typical} first-order stationarity criterion
$\|\nabla f(\mX_t)\|_F\to 0$, we add two mild structural conditions:
(i) a dense polar factor lower bound on $\|\mQ_t\|_1$, i.e.,  $\|\mQ_t\|_1\approx\Theta(\sqrt{rd_1d_2})$ and
(ii) a spectral-flatness condition that controls the singular-value spread of $\nabla f(\mX_t)$.
Under these conditions, $\Phi_t$ is lower bounded by a positive constant times $\|\nabla f(\mX_t)\|_F$. Theorem~\ref{thm:descent_stationarity_simplified} yields the finite-time bound
\[
\frac{1}{T}\sum_{t=0}^{T-1}\|\nabla f(\mX_t)\|_F
\ \le\
\frac{\sqrt{L}\big(f(\mX_0)-f_{\inf}\big)}{c_\star\,\sqrt{T}},
\]
for some constant $c_\star$, which forces the average gradient norm to vanish at rate $O(1/\sqrt{T})$, matching typical non-convex bound.

Finally, we note that the analysis for the full gradient can be extended to incorporate (i) momentum/Nesterov mixing (treating
$\widetilde{\mG}_t$ as a biased gradient surrogate controlled by $(\beta_1,\beta_2)$) and (ii) stochastic gradients
(replacing $\mG_t$ by an unbiased estimator and handling the additional variance terms) using standard techniques
from nonconvex optimization~\cite{wang2023closing,wang2024provable,li2023convergence}. Such extensions are conceptually routine but lead to substantially denser presentations.

\section{Experiments}
We evaluate \nameA{} against AdamW and Muon across language and vision, and in both pretraining and supervised fine-tuning settings.
Within each setting, we hold the architecture, data, and training pipeline fixed and vary only the optimizer and its hyperparameters.
Unless otherwise stated, we use standard defaults for AdamW and Muon and our default $(\beta_1,\beta_2)$ for \nameA{}; full configuration details are provided in Appendix~\ref{app:exp_details}.

\subsection{\nameA{} performs well on LM pretraining}
\label{sec:exp_lm_pretrain}

We evaluate \nameA{} on standard large-scale language-model pretraining setups.
To ensure a fair comparison, we (i) use each baseline optimizer with its \emph{default} hyperparameter
configuration as recommended in prior work, and (ii) match the overall update magnitude across methods
so that improvements are attributable to the \emph{update geometry} rather than trivially larger steps.
Concretely, we first pretrain GPT-2 models (124M--770M) for 48B tokens (100K iterations) following the
default AdamW configuration, and then train Llama-2-7B for 32B tokens (8K iterations with batch size 4M)
using the default hyperparameters for this regime.

\paragraph{GPT-2 pretraining (124M--770M).}
We study scalability on GPT-2 models trained on OpenWebText at three sizes: \emph{small} (124M), \emph{medium} (355M),
and \emph{large} (770M).
Figure~\ref{fig:gpt2_pretraining} reports training-loss trajectories.
Across all model sizes, \nameA{} converges faster than both AdamW, Lion and Muon.
This suggests that combining spectral control (via orthogonalization) with coordinate-wise control
(via the sign update) improves optimization speed in this canonical pretraining setting.

\begin{figure*}[htbp]
    \centering
    \begin{subfigure}{0.32\textwidth}
        \centering
        \includegraphics[width=\linewidth]{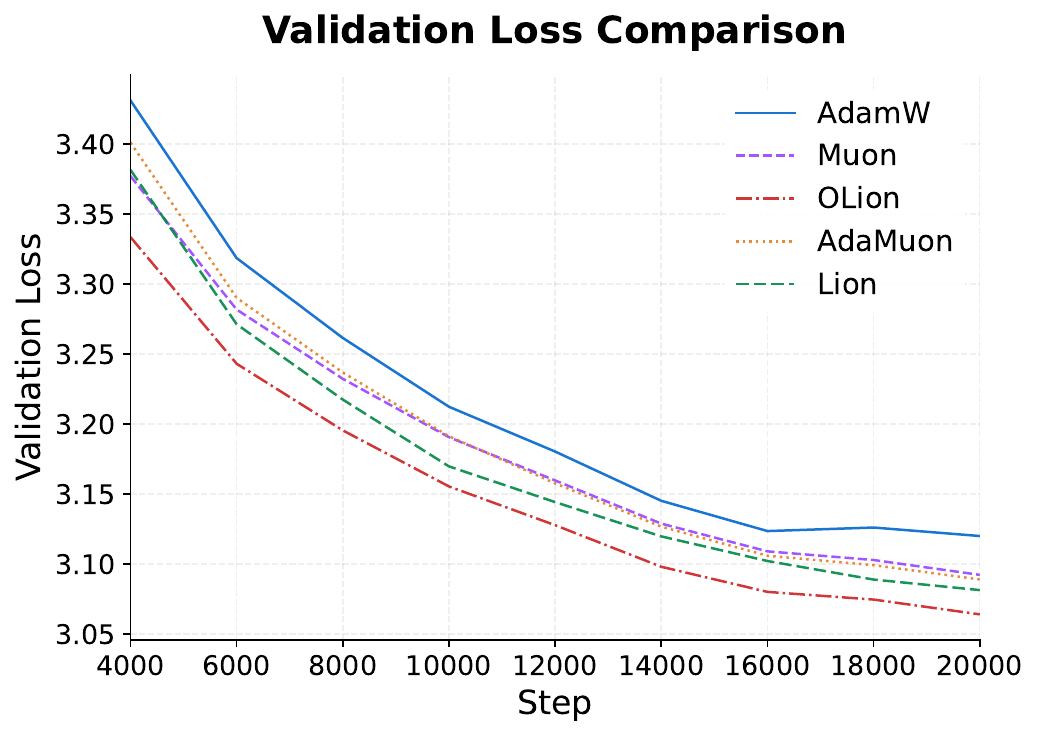}
        \caption{GPT-2 \emph{small} (124M).}
    \end{subfigure}
    \begin{subfigure}{0.32\textwidth}
        \centering
        \includegraphics[width=\linewidth]{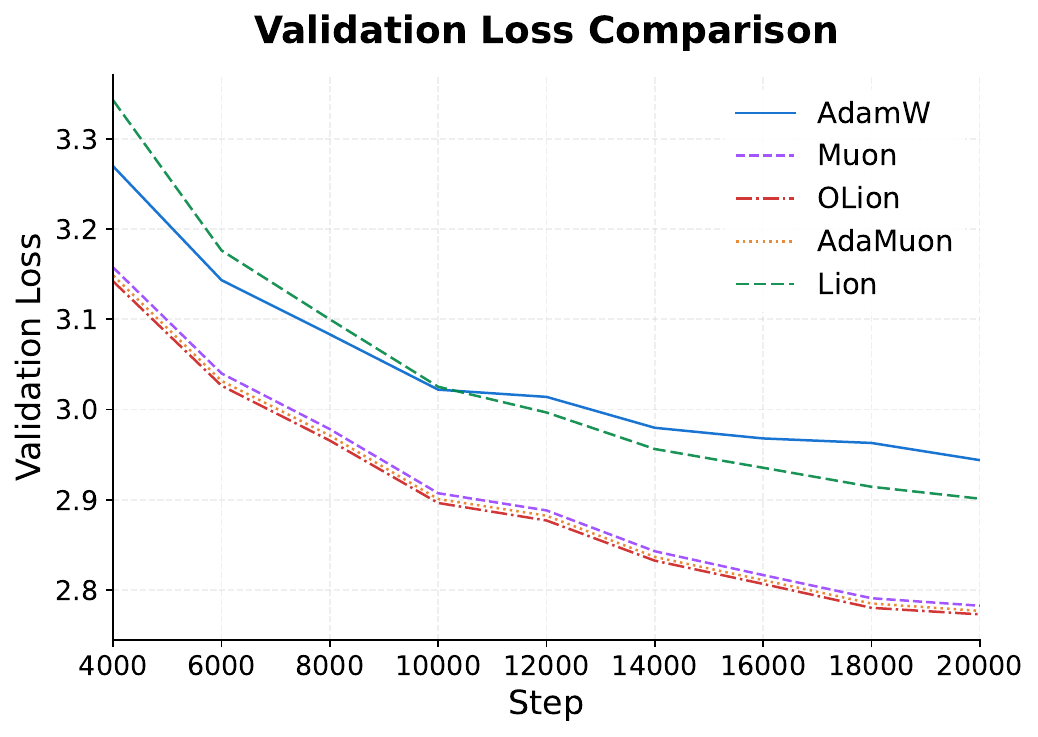}
        \caption{GPT-2 \emph{medium} (355M).}
    \end{subfigure}
    \begin{subfigure}{0.32\textwidth}
        \centering
        \includegraphics[width=\linewidth]{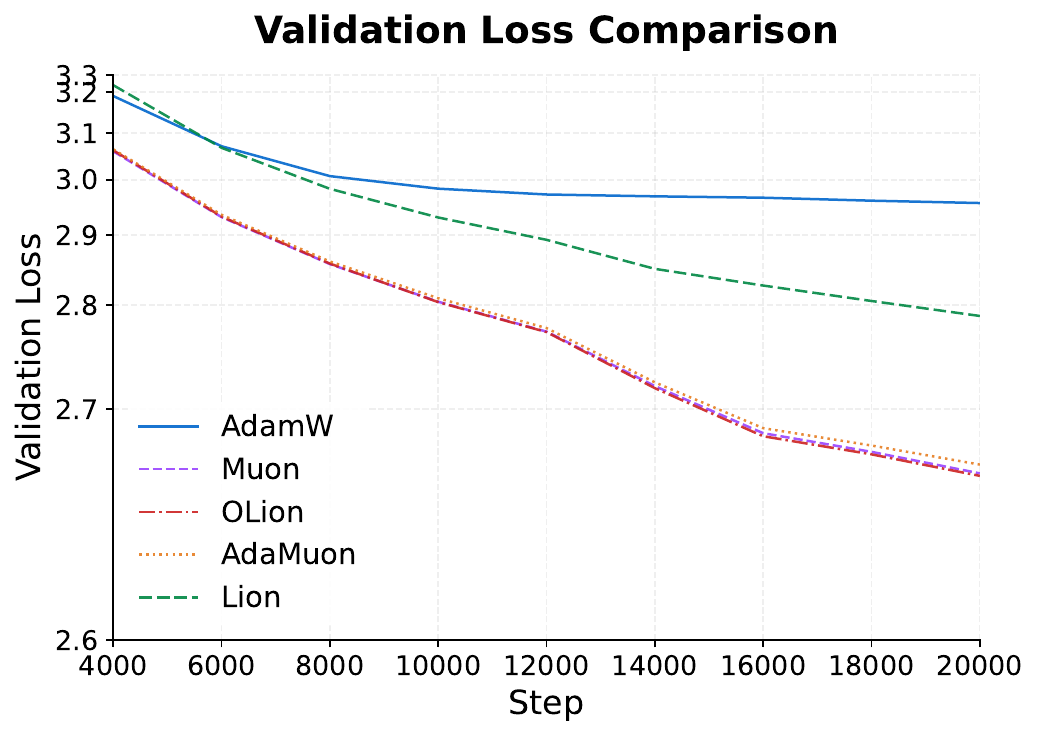}
        \caption{GPT-2 \emph{large} (770M).}
    \end{subfigure}
    \caption{Validation losses for GPT-2 pretraining with AdamW, Lion, Muon, and \nameA{}.
    \nameA{} converges faster across all model sizes.}
    \label{fig:gpt2_pretraining}
\end{figure*}

\paragraph{Llama-2-7B pretraining.}
We next evaluate billion-parameter pretraining with Llama-2-7B by using the FSDP training pipeline.
Figure~\ref{fig:llama7b_pretraining} shows both training and validation loss.
\nameA{} maintains consistently lower loss throughout training compared to Muon, Lion and AdamW,
demonstrating that the gains observed on GPT-2 persist at the 7B scale under distributed training.

\begin{figure}[htbp]
    \begin{subfigure}{0.48\linewidth}
        \centering
        \includegraphics[width=\linewidth]{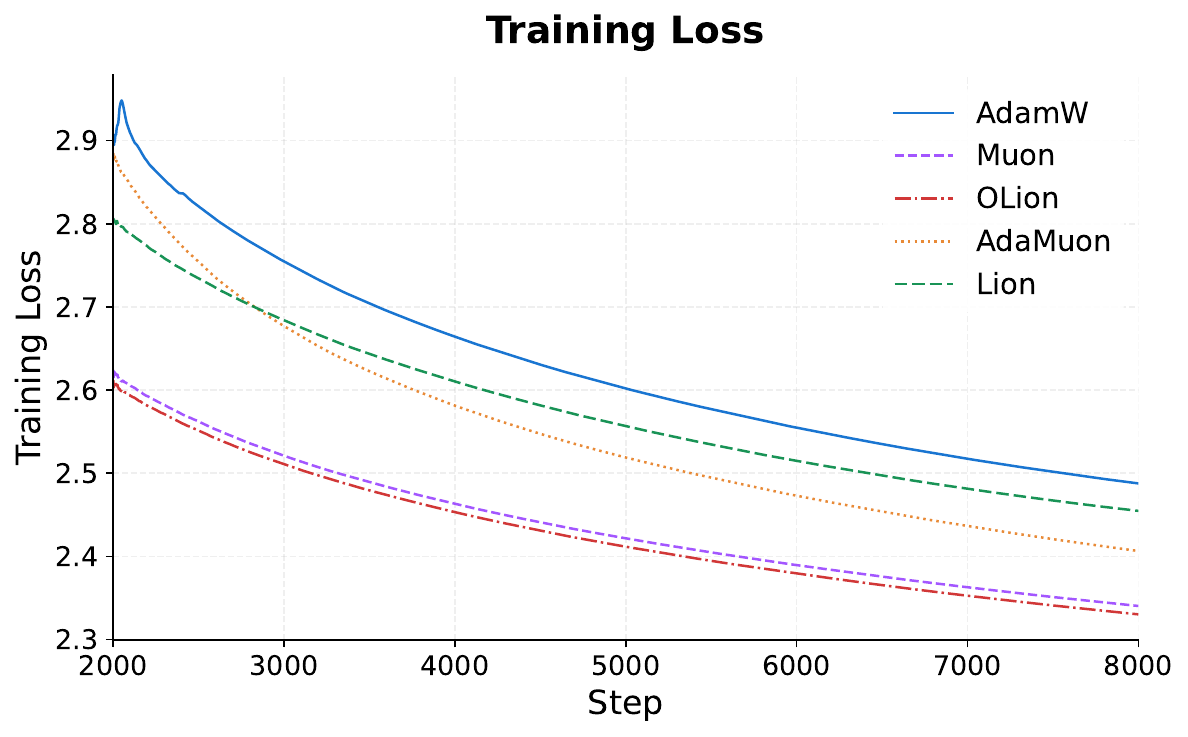}
        \caption{Training loss.}
    \end{subfigure}
    \begin{subfigure}{0.48\linewidth}
        \centering
        \includegraphics[width=\linewidth]{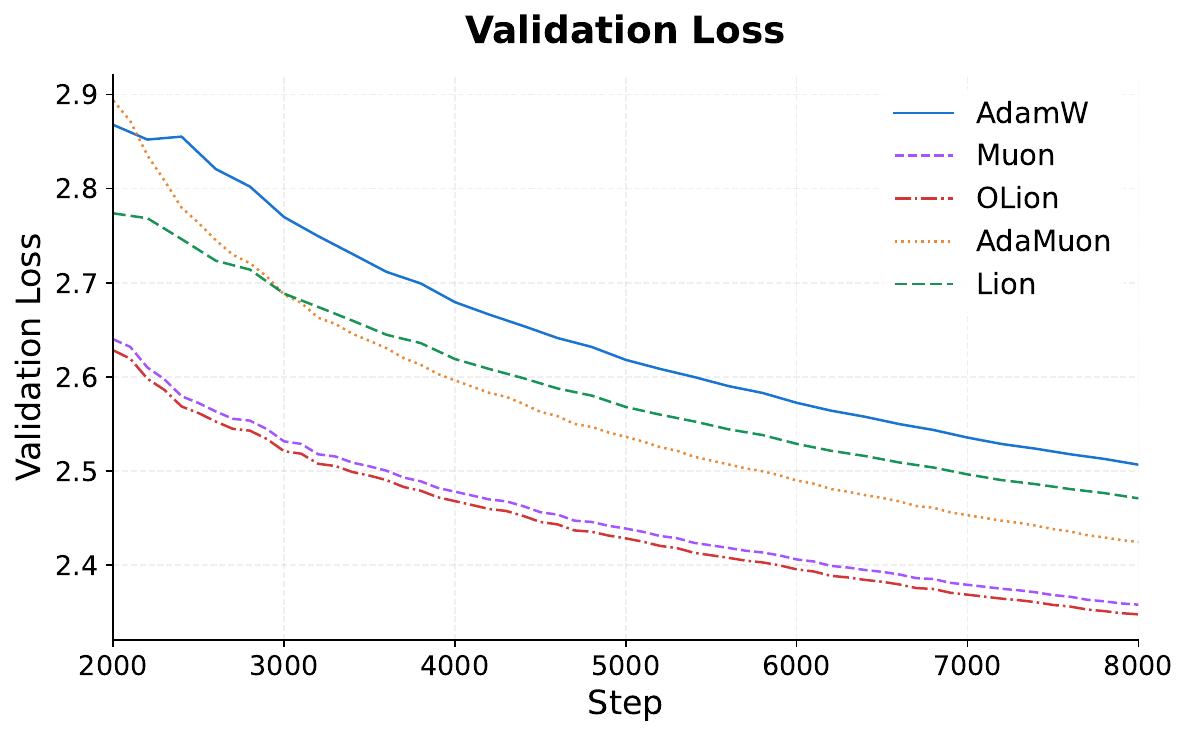}
        \caption{Validation loss.}
    \end{subfigure}
    \caption{Llama-2-7B pretraining curves with different optimizers: AdamW, Lion, Muon, and \nameA{}.}
    \label{fig:llama7b_pretraining}
\end{figure}

\subsection{\nameA{} enjoys a wide learning-rate range}
\label{sec:lr_robustness}

We further examine how sensitive \nameA{} is to the choice of learning rate.
Using the GPT-2 small pretraining setup from Section~\ref{sec:exp_lm_pretrain}, we vary \emph{only} the learning rate
and keep all other hyperparameters and training procedures fixed.
We compare \nameA{} against Muon, Lion and AdaMuon under four learning rates:
$3\times 10^{-4}$, $1\times 10^{-3}$, $2\times 10^{-3}$, and $5\times 10^{-3}$.
For each optimizer--learning-rate pair, we report the validation loss at training step 10{,}000.

As shown in Figure~\ref{fig:lr_robustness}, \nameA{} achieves consistently lower validation loss than other optimizers across the entire range of learning rates. This indicates that the improvements from \nameA{} are not tied to
a narrowly tuned step size; instead, its advantage persists across substantially different learning-rate regimes.

\begin{figure}[H]
    \centering
    \includegraphics[width=0.7\linewidth]{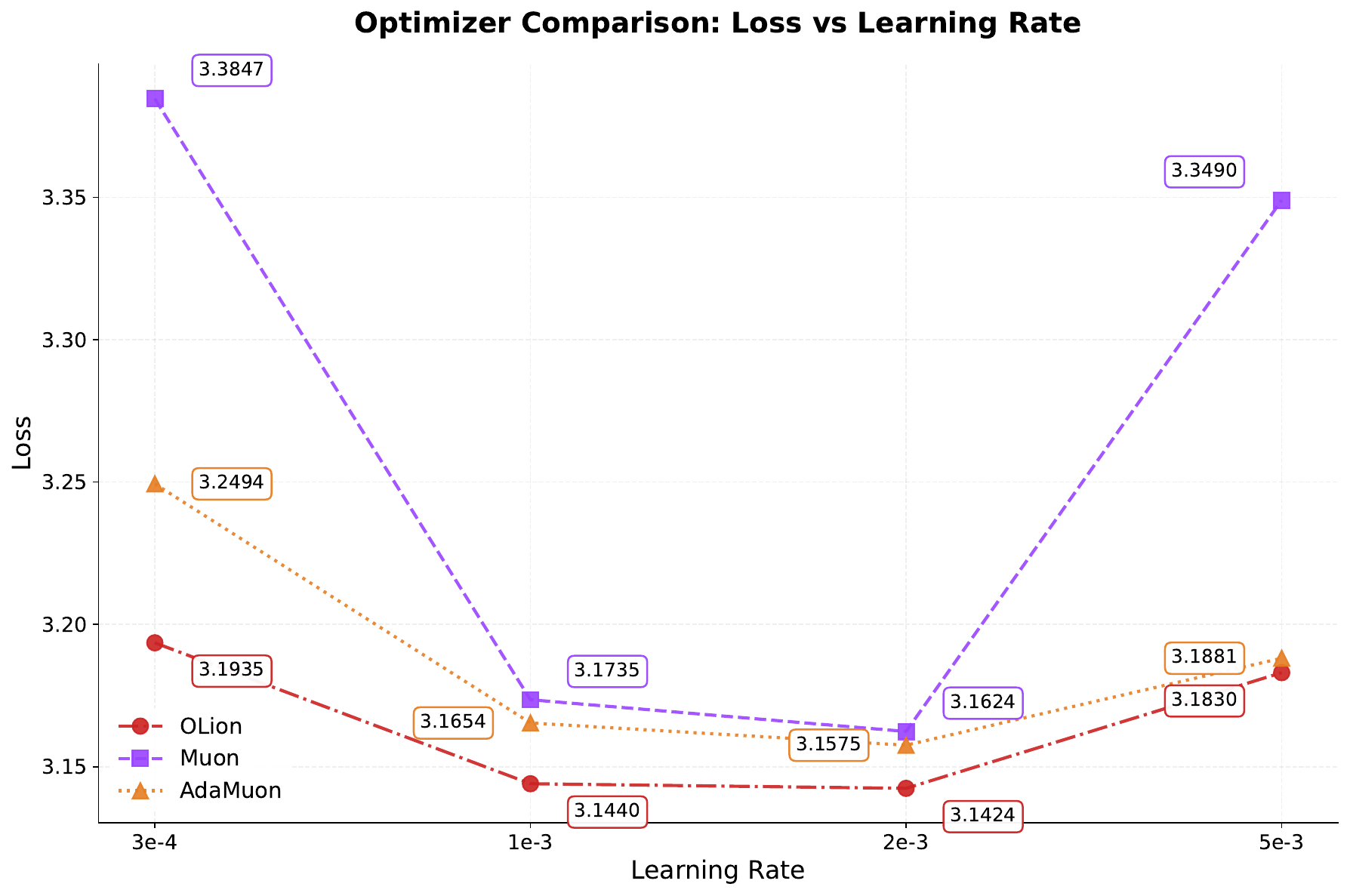}
    \caption{Validation loss at step 10{,}000 for GPT-2 small trained with different optimizers under four learning rates:
    $3\times 10^{-4}$, $1\times 10^{-3}$, $2\times 10^{-3}$, and $5\times 10^{-3}$.
    \nameA{} consistently outperforms both baselines across all learning rates, demonstrating robustness to step-size selection.}
    \label{fig:lr_robustness}
\end{figure}

\subsection{Validating the induced implicit bias of \nameA{}}
\label{sec:exp_bias}

To test whether \nameA{} induces the implicit biases suggested by our formulation, we track both
\emph{spectral-} and \emph{$\ell_\infty$-}related statistics of weight matrices during training and at convergence.
We focus on GPT-2 small for a controlled study and compare AdamW, Muon, Lion, and \nameA{} throughout.

\paragraph{Biases along the training trajectory.}
Figure~\ref{fig:spectral-infty-norm} reports the evolution of the spectral norm and the $\ell_\infty$ norm
(maximum absolute entry) for representative weight matrices of different shapes.
Across all shown matrices, \nameA{} maintains smaller spectral norms than AdamW and Lion, and is competitive with (often better than) Muon, consistent with the effect of the orthogonalization step on the update geometry.
At the same time, \nameA{} yields systematically smaller $\ell_\infty$ norms than Muon (and typically also AdamW and Lion), indicating stronger suppression of coordinate outliers---as expected from the post-orthogonalization sign operation.
Together, these trends provide direct evidence that \nameA{} combines spectral control and coordinate-wise control during optimization.

\paragraph{Bias at the end of training.}
To probe the end-of-training behavior more directly, Figure~\ref{fig:implicit_bias} summarizes the distributions of
(i) singular values (SVD) and (ii) entrywise magnitudes (Abs) for the same set of matrices at convergence.
Relative to AdamW, \nameA{} shifts the singular-value spectrum downward, indicating a smaller effective spectral scale,
while relative to Muon, \nameA{} tightens the tail of absolute values, indicating smaller coordinate magnitudes.
These distributional shifts match the intended combination of spectral and $\ell_\infty$ implicit biases.
For additional per-layer and per-matrix case studies that further corroborate these trends, see Appendix~\ref{app:matrix_cases}.



\begin{figure}[htbp]
  \centering
  \begin{subfigure}[t]{0.98\linewidth}
    \centering
    \includegraphics[width=\linewidth]{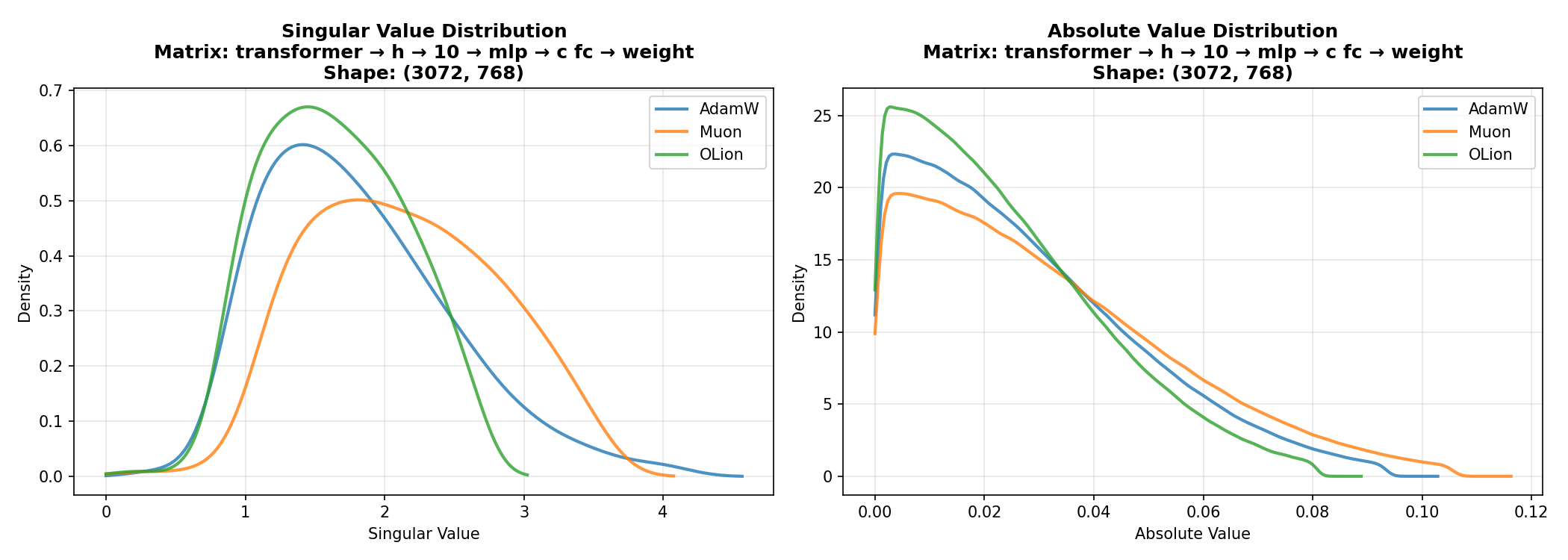}
    \caption{MLP projection (layer 10), shape $3072\times 768$.}
    \label{fig:implicit_bias_matrix_h10_mlp}
  \end{subfigure}
  \hfill
  \begin{subfigure}[t]{0.98\linewidth}
    \centering
    \includegraphics[width=\linewidth]{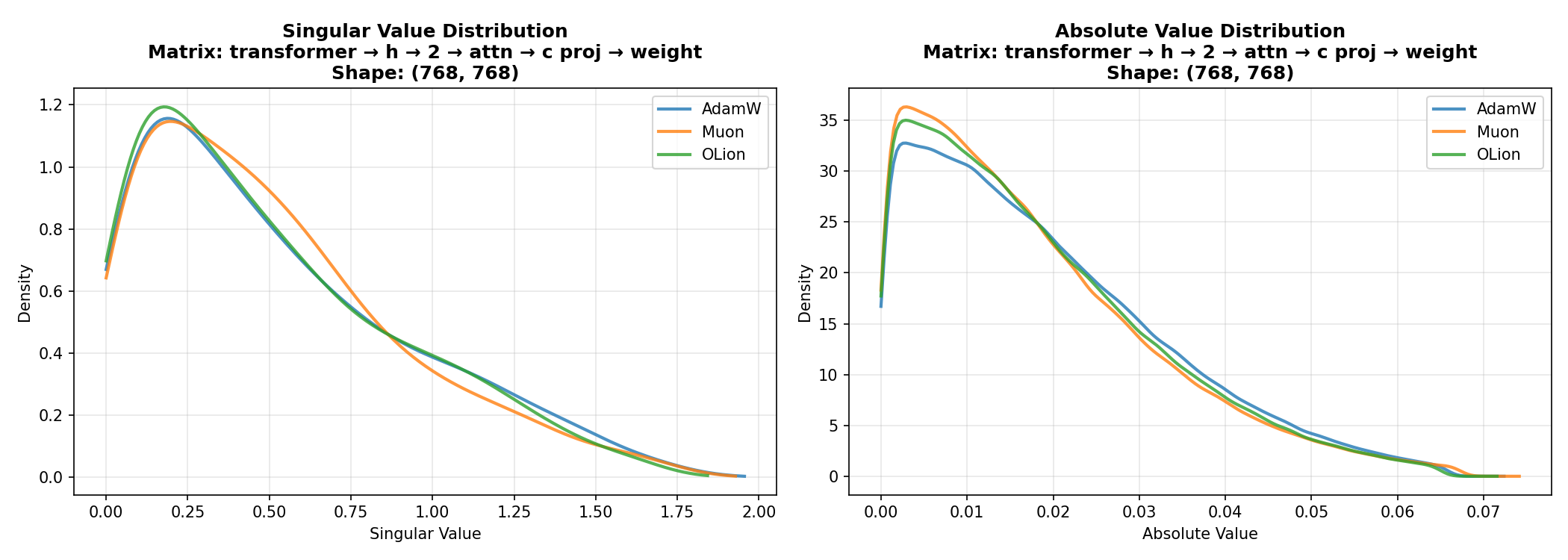}
    \caption{Attention projection (layer 2), shape $768\times 768$.}
    \label{fig:implicit_bias_h2_attn}
  \end{subfigure}
  \caption{End-of-training implicit bias on GPT-2 \emph{Small}: distributions of singular values (SVD) and entrywise magnitudes (Abs) for representative weight matrices.}
  \label{fig:implicit_bias}
\end{figure}

\subsection{Extension to image pretraining: SiT}
\label{sec:exp_sit}

We next evaluate whether the benefits of \nameA{} extend beyond language modeling to image-generation pretraining.
Specifically, we pretrain SiT-B/2 (a diffusion transformer) on ImageNet-1K with $256\times256$ resolution using the same
training recipe across optimizers.

Figure~\ref{fig:sitb2_pretraining} reports two standard objectives for this setting: the projection loss and the
denoising loss. Across both metrics, \nameA{} reduces loss faster than Muon throughout training.
These results mirror the trends observed in language-model pretraining and suggest that the advantage of intersecting
spectral and $\ell_\infty$ implicit biases is not text-specific, but carries over to diffusion-model pretraining.

\begin{figure}[htbp]
    \centering
    \begin{subfigure}{0.48\linewidth}
        \centering
        \includegraphics[width=\linewidth]{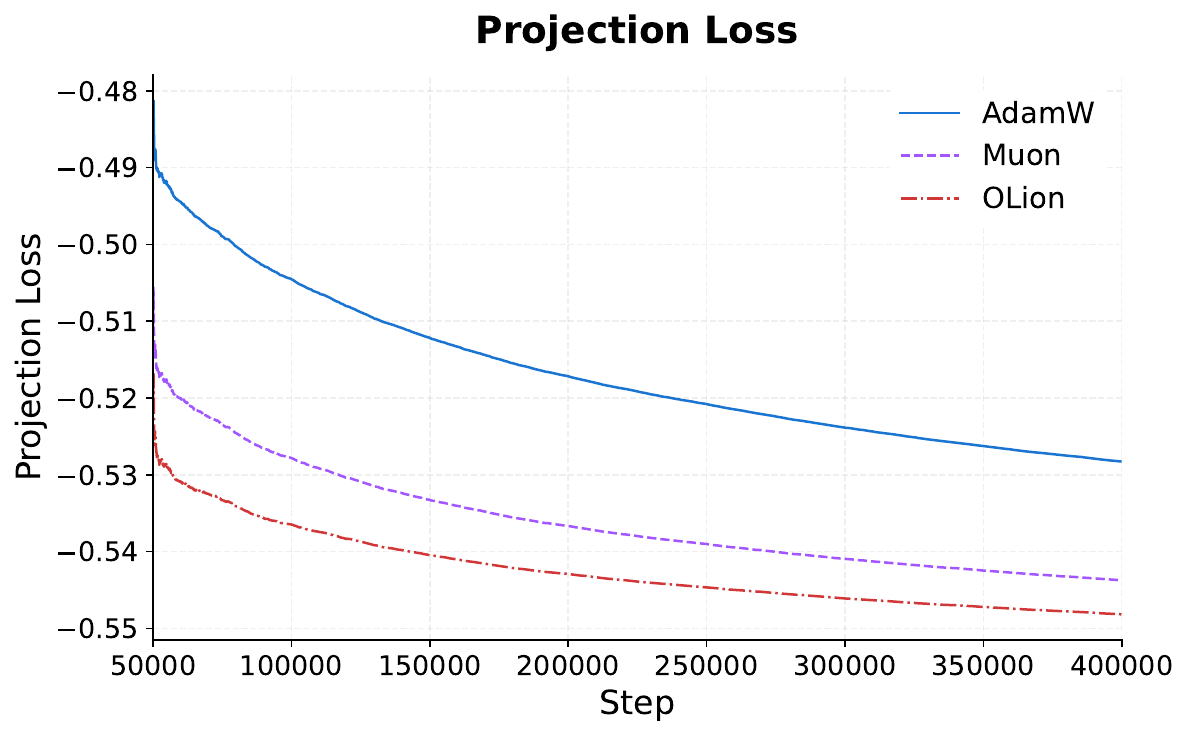}
        \caption{Projection loss.}
    \end{subfigure}
    \hfill
    \begin{subfigure}{0.48\linewidth}
        \centering
        \includegraphics[width=\linewidth]{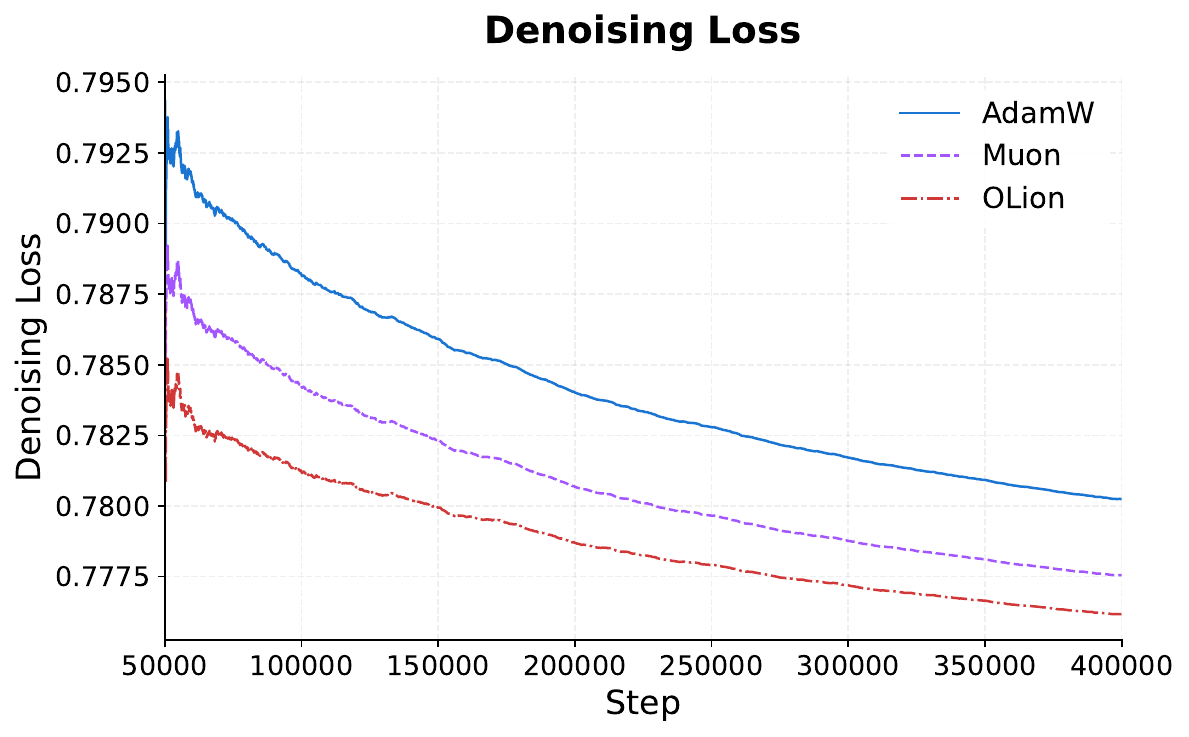}
        \caption{Denoising loss.}
    \end{subfigure}
    \caption{SiT-B/2 pretraining on ImageNet-1K at $256\times256$. \nameA{} converges faster than others on  projection/denoising losses.}
    \label{fig:sitb2_pretraining}
\end{figure}

\subsection{Extension to supervised fine-tuning: Llama-3.1-8B} \label{sec:exp_sft}

Finally, we evaluate \nameA{} on supervised fine-tuning (SFT) of Llama-3.1-8B checkpoints that were
\emph{pretrained with AdamW}. This setting is particularly relevant because prior observations suggest that Muon can
suffer from a noticeable \emph{pretrain--SFT mismatch} when applied to AdamW-pretrained models, leading to degraded downstream performance relative to AdamW~\cite{liu2025muonscalablellmtraining}.

Table~\ref{tab:sft_math} summarizes downstream benchmark accuracies after SFT;
\nameA{} performs best across the listed tasks. 

One plausible explanation is that \nameA{} reduces optimizer mismatch relative to Muon while still retaining spectral control. In particular, the post-orthogonalization sign step introduces an element-wise normalization structure that is closer in spirit to AdamW-style coordinate control, which may make the AdamW-pretrained checkpoint more compatible with \nameA{} than with Muon’s purely orthogonal (and less coordinate-normalized) update.  The detailed fine-tuning setting and training curves are provided in Appendix~\ref{app:sft}. 

\begin{table}[t]
\centering
\caption{Zero- and few-shot math benchmark accuracy (\%) for the base model and SFT models fine-tuned with different optimizers. Best results are in \textbf{bold}; second-best results are in \textbf{\textcolor{gray}{bold gray}}.}
\label{tab:sft_math}
\resizebox{\columnwidth}{!}{%
\begin{tabular}{lcccc}
\toprule
Benchmark & Base (no SFT) & AdamW SFT & Muon SFT & \nameA{} SFT \\
\midrule
GSM8K 0-shot    & 17.79 & {\color{gray}\textbf{57.99}} & 57.24 & \textbf{60.04} \\
GSM8K 4-shot    & 52.50 & {\color{gray}\textbf{60.35}} & 58.30 & \textbf{60.58} \\
MATH 0-shot     & 14.20 & 18.46 & {\color{gray}\textbf{19.32}} & \textbf{19.80} \\
NumGLUE 0-shot  & 25.70 & 35.04 & {\color{gray}\textbf{40.78}} & \textbf{42.14} \\
NumGLUE 4-shot  & 38.09 & 39.82 & {\color{gray}\textbf{43.37}} & \textbf{44.04} \\
SimulEq 0-shot  & 12.20 & {\color{gray}\textbf{21.48}} & 21.20 & \textbf{22.57} \\
Aqua 0-shot     & 23.62 & {\color{gray}\textbf{42.52}} & 41.34 & \textbf{46.45} \\
\bottomrule
\end{tabular}}
\end{table}

\section{Conclusion}
We presented \nameA{} (\fullname{}), a memory-efficient optimizer that combines spectral control from Muon-style orthogonalized updates with $\ell_\infty$-style coordinate control from Lion-style sign updates.
Our method is motivated by a maximal-update view under norm-induced geometries: orthogonalization corresponds to a spectral geometry, while sign corresponds to an $\ell_\infty$ geometry.
Composing these operations yields an efficient, single-step approximation to intersection seeking between the two geometries, with the scaled Hadamard set providing a useful idealized reference for matrix-shaped parameters.

On the theoretical side, we established that sign-after-orthogonalization preserves directional alignment with the orthogonalized direction, and provided standard convergence guarantees under non-convex smoothness conditions.
Empirically, \nameA{} consistently matches or outperforms AdamW and Muon across a broad range of regimes, including GPT-2 and Llama-2 language-model pretraining, SiT image pretraining, and supervised fine-tuning of Llama-3.1-8B, while reducing optimizer mismatch relative to Muon.

Several directions remain open. First, the Hadamard-intersection viewpoint suggests richer operator-splitting or multi-step intersection solvers beyond a single composed projection.
Second, the sign update may offer additional benefits in distributed and low-precision settings, motivating a deeper study of communication efficiency and quantization behavior under \nameA{}.
We hope \nameA{} provides a practical and principled step toward combining complementary implicit biases for large-scale training.

\section*{Impact Statement}
This paper proposes a new optimization method that improves training stability and efficiency for large-scale models. Our goal is to advance the optimization toolkit for modern machine learning, with potential positive impacts including reduced training compute for comparable performance and improved robustness across hyperparameter regimes, which may lower the barrier to reproducing and extending large-model training.

We do not introduce new data sources, collect personal data, or release any trained models or datasets beyond standard public resources used in the community. We will provide implementation details to support reproducibility, and we encourage responsible use consistent with existing best practices for large-model development and deployment.

\bibliographystyle{unsrtnat}
\bibliography{optimization}

@book{absil2008optimization,
  author    = {Absil, P.-A. and Mahony, Robert and Sepulchre, Rodolphe},
  title     = {Optimization Algorithms on Matrix Manifolds},
  publisher = {Princeton University Press},
  year      = {2008}
}

@inproceedings{liu2024communication,
  title={Communication Efficient Distributed Training with Distributed Lion},
  author={Liu, Bo and Wu, Lemeng and Chen, Lizhang and Liang, Kaizhao and Zhu, Jiaxu and Liang, Chen and Krishnamoorthi, Raghuraman and Liu, Qiang},
  booktitle={Advances in Neural Information Processing Systems (NeurIPS)},
  year={2024}
}

@inproceedings{chee2023quip,
  title={QuIP: 2-Bit Quantization of Large Language Models via Incoherence Processing},
  author={Chee, Jerry and Cai, Yaohui and Kuang, Volkan and De Sa, Christopher},
  booktitle={Advances in Neural Information Processing Systems},
  year={2023}
}

@article{ashkboos2024quarot,
  title={QuaRot: Outlier-Free 4-Bit Inference in Rotated LLMs},
  author={Ashkboos, Saleh and Hudobivnik, Amir and Babakniya, Shervin and Klafenboeck, Elias and Hoefler, Torsten},
  journal={arXiv preprint arXiv:2404.00456},
  year={2024}
}

@article{liu2024spinquant,
  title={SpinQuant: LLM Quantization with Learnable Rotations},
  author={Liu, Zechun and Oguz, Barlas and Zhao, Shangyuan and Gholami, Amir and Won, Jiyuan and Yang, Linjie and Kozareva, Zornitsa and Xiao, Ting and Ma, Xiaofei},
  journal={arXiv preprint arXiv:2405.16406},
  year={2024}
}

@article{tseng2024quipsharp,
  title={QuIP\#: QuIP with Lattice Codebooks and Weight Rescaling},
  author={Tseng, Albert and Chee, Jerry and Sun, Christopher and Cai, Yaohui and De Sa, Christopher},
  journal={arXiv preprint arXiv:2402.04396},
  year={2024}
}

@article{edelman1998geometry,
  author  = {Edelman, Alan and Arias, Tomas A. and Smith, Steven T.},
  title   = {The Geometry of Algorithms with Orthogonality Constraints},
  journal = {SIAM Journal on Matrix Analysis and Applications},
  volume  = {20},
  number  = {2},
  pages   = {303--353},
  year    = {1998},
  doi     = {10.1137/S0895479895290954}
}

@article{higham1986polar,
  author  = {Higham, Nicholas J.},
  title   = {Computing the Polar Decomposition---with Applications},
  journal = {SIAM Journal on Scientific and Statistical Computing},
  volume  = {7},
  number  = {4},
  pages   = {1160--1174},
  year    = {1986},
  doi     = {10.1137/0907079}
}

@article{wen2013feasible,
  author  = {Wen, Zaiwen and Yin, Wotao},
  title   = {A feasible method for optimization with orthogonality constraints},
  journal = {Mathematical Programming},
  volume  = {142},
  number  = {1--2},
  pages   = {397--434},
  year    = {2013},
  doi     = {10.1007/s10107-012-0584-1}
}

@article{gao2018framework,
  author  = {Gao, Bin and Liu, Xin and Chen, Xiaojun and Yuan, Ya-xiang},
  title   = {A New First-Order Algorithmic Framework for Optimization Problems with Orthogonality Constraints},
  journal = {SIAM Journal on Optimization},
  volume  = {28},
  number  = {1},
  pages   = {302--332},
  year    = {2018},
  doi     = {10.1137/16M1098759}
}

@inproceedings{li2020cayley,
  author    = {Li, Jun and Li, Fuxin and Todorovic, Sinisa},
  title     = {Efficient Riemannian Optimization on the Stiefel Manifold via the Cayley Transform},
  booktitle = {International Conference on Learning Representations (ICLR)},
  year      = {2020}
}

@inproceedings{bernstein2018signsgd,
  author    = {Bernstein, Jeremy and Wang, Yu-Xiang and Azizzadenesheli, Kamyar and Anandkumar, Anima},
  title     = {signSGD: Compressed Optimisation for Non-Convex Problems},
  booktitle = {Proceedings of the 35th International Conference on Machine Learning (ICML)},
  series    = {Proceedings of Machine Learning Research},
  volume    = {80},
  pages     = {560--569},
  year      = {2018}
}

@inproceedings{bernstein2019signsgdmajority,
  author    = {Bernstein, Jeremy and Zhao, Jiawei and Azizzadenesheli, Kamyar and Anandkumar, Anima},
  title     = {signSGD with Majority Vote is Communication Efficient and Fault Tolerant},
  booktitle = {7th International Conference on Learning Representations (ICLR)},
  publisher = {OpenReview.net},
  year      = {2019}
}

@inproceedings{alistarh2017qsgd,
  author    = {Alistarh, Dan and Grubic, Demjan and Li, Jerry and Tomioka, Ryota and Vojnovic, Milan},
  title     = {{QSGD:} Communication-Efficient {SGD} via Gradient Quantization and Encoding},
  booktitle = {Advances in Neural Information Processing Systems 30 (NeurIPS 2017)},
  pages     = {1709--1720},
  year      = {2017}
}

@inproceedings{gunasekar2018characterizing,
  title={Characterizing Implicit Bias in Terms of Optimization Geometry},
  author={Gunasekar, Suriya and Lee, Jason and Soudry, Daniel and Srebro, Nathan},
  booktitle={Proceedings of the 35th International Conference on Machine Learning},
  year={2018},
  publisher={PMLR},
  volume={80},
  pages={1855--1864}
}

@article{bernstein2024old,
  title={Old Optimizer, New Norm: An Anthology},
  author={Bernstein, Jeremy and Newhouse, Laker},
  journal={arXiv preprint arXiv:2409.03137},
  year={2024}
}

@inproceedings{lyu2020gradient,
  title={Gradient Descent Maximizes the Margin of Homogeneous Neural Networks},
  author={Lyu, Kaifeng and Li, Jian},
  booktitle={International Conference on Learning Representations},
  year={2020}
}

@article{soudry2018implicit,
  title={The Implicit Bias of Gradient Descent on Separable Data},
  author={Soudry, Daniel and Hoffer, Elad and Nacson, Mor Shpigel and Gunasekar, Suriya and Srebro, Nathan},
  journal={The Journal of Machine Learning Research},
  volume={19},
  number={1},
  pages={2822--2878},
  year={2018},
  publisher={JMLR. org}
}

@inproceedings{vasudeva2024rich,
  title={The Rich and the Simple: On the Implicit Bias of {Adam} and {SGD}},
  author={Vasudeva, Bhavya and De Bortoli, Valentin andr Courville, Aaron and Mitliagkas, Ioannis},
  booktitle={Advances in Neural Information Processing Systems},
  volume={37},
  year={2024}}

@inproceedings{wang2021momentum,
  title={Does Momentum Change the Implicit Regularization on Separable Data?},
  author={Wang, Bohan and Meng, Qi and Zhang, Huishuai and Sun, Ruoyu and Chen, Wei and Ma, Zhi-Ming and Liu, Tie-Yan},
  booktitle={Advances in Neural Information Processing Systems},
  volume={34},
  pages={25384--25397},
  year={2021}
}

@inproceedings{wang2021implicit,
  title={The Implicit Bias for Adaptive Optimization Algorithms on Homogeneous Neural Networks},
  author={Wang, Bohan and Meng, Qi and Chen, Wei and Liu, Tie-Yan},
  booktitle={Proceedings of the 38th International Conference on Machine Learning},
  pages={10849--10858},
  year={2021},
  organization={PMLR}}

@article{li2023implicit,
  title={Implicit Bias of Deep Learning in the Large Learning Rate Phase: A Data Separability Perspective},
  author={Li, Mulang and Liu, Shuo and Wang, Irwin King},
  journal={Applied Sciences},
  volume={13},
  number={6},
  pages={3961},
  year={2023},
  publisher={MDPI}
}

@inproceedings{gupta2018shampoo,
  title={Shampoo: Preconditioned stochastic tensor optimization},
  author={Gupta, Vineet and Koren, Tomer and Singer, Yoram},
  booktitle={International Conference on Machine Learning},
  pages={1842--1850},
  year={2018},
  organization={PMLR}
}

@inproceedings{karimireddy2019errorfeedback,
  author    = {Karimireddy, Sai Praneeth and Rebjock, Quentin and Stich, Sebastian and Jaggi, Martin},
  title     = {Error Feedback Fixes SignSGD and other Gradient Compression Schemes},
  booktitle = {Proceedings of the 36th International Conference on Machine Learning (ICML)},
  series    = {Proceedings of Machine Learning Research},
  volume    = {97},
  pages     = {3252--3261},
  year      = {2019}
}

@article{page2025muonall,
  title={MuonAll: Muon Variant for Efficient Finetuning of Large Language Models},
  author={Page, Saurabh and Joshi, Advait and Sonawane, S. S.},
  journal={arXiv preprint arXiv:2511.06086},
  year={2025}
}

@article{jordan2024muon,
  title={Muon: An optimizer for hidden layers in neural networks},
  author={Jordan, Keller and Jin, Yuchen and Boza, Vlado and You, Jiacheng and Cesista, Franz and Newhouse, Laker and Bernstein, Jeremy},
  journal={Blog post},
  url={https://kellerjordan.github.io/posts/muon/},
  year={2024}
}

@article{chang2025convergence,
  title={On the Convergence of Muon and Beyond},
  author={Chang, Da and Liu, Yongxiang and Yuan, Ganzhao},
  journal={arXiv preprint},
  year={2025}
}

@inproceedings{goldstein2025exploration,
  title={An Exploration of Non-Euclidean Gradient Descent: Muon and its Many Variants},
  author={Goldstein, Samuel and Grangier, David and Menzies, James},
  booktitle={OpenReview},
  year={2025}
}

@article{li2025normuon,
  title={NorMuon: Making Muon more efficient and scalable},
  author={Li, Zichong and Liu, Liming and Liang, Chen and Chen, Weizhu and Zhao, Tuo},
  journal={arXiv preprint},
  year={2025}
}

@article{boissin2025turbo,
  title={Turbo-Muon: Accelerating Newton-Schulz for Large Scale Optimization},
  author={Boissin, Thibaut and Massena, Thomas and Mamalet, Franck and Serrurier, Mathieu},
  journal={arXiv preprint},
  year={2025}
}

@article{gupta2025effective,
  title={Effective Quantization of Muon Optimizer States},
  author={Gupta, Aman and Celente, Rafael and Shivanna, Abhishek and Braithwaite, D. T. and Dexter, Gregory and Tang, Shao and Udagawa, Hiroto and Silva, Daniel and Ramanath, Rohan and Keerthi, S. Sathiya},
  journal={arXiv preprint arXiv:2509.23106},
  year={2025}
}

@article{liu2025muon,
  title={Muon is Scalable for LLM Training},
  author={Liu, Jingyuan and Su, Jianlin and Yao, Xingcheng and Jiang, Zhejun and Lai, Guokun and Du, Yulun and Qin, Yidao and Xu, Weixin and Lu, Enzhe and Yan, Junjie and Chen, Yanru and Zheng, Huabin and Liu, Yibo and Liu, Shaowei and Yin, Bohong and He, Weiran and Zhu, Han and Wang, Yuzhi and Wang, Jianzhou and Dong, Mengnan and Zhang, Zheng and Kang, Yongsheng and Zhang, Hao and Xu, Xinran and Zhang, Yutao and Wu, Yuxin and Zhou, Xinyu and Yang, Zhilin},
  journal={arXiv preprint arXiv:2502.16982},
  year={2025}
}

@article{si2025adamuon,
  title={AdaMuon: Adaptive Muon Optimizer},
  author={Si, Chongjie and Zhang, Debing and Shen, Wei},
  journal={arXiv preprint arXiv:2507.11005},
  year={2025}
}

@inproceedings{chen2023symbolic,
  title={Symbolic Discovery of Optimization Algorithms},
  author={Chen, Xiangning and Liang, Chen and Huang, Da and Real, Esteban and Wang, Kaiyu and Liu, Hieu and Pham, Hieu and Dong, Xuanyi and Luong, Thang and Hsieh, Cho-Jui and Lu, Yifeng and Le, Quoc V.},
  booktitle={Advances in Neural Information Processing Systems},
  volume={36},
  year={2023}
}

@article{sfyraki2025lions,
  title={Lions and Muons: Optimization via Stochastic Frank-Wolfe},
  author={Sfyraki, Amalia and Wang, Yifan and Cevher, Volkan},
  journal={arXiv preprint arXiv:2506.04192},
  year={2025}
}

@article{jiang2025convergence,
  title={Convergence Analysis of the Lion Optimizer in Centralized and Distributed Settings},
  author={Jiang, Wei and Zhang, Lijun},
  journal={arXiv preprint arXiv:2508.12327v1},
  year={2025}
}

@article{rong2025refined,
  title={A Refined Lion Optimizer for Deep Learning},
  author={Rong, Jian and Ma, Chenhao and Zhang, Qinghui and Cao, Yong and Kou, Weili},
  journal={Scientific Reports},
  volume={15},
  number={1},
  pages={1--15},
  year={2025},
  publisher={Nature Publishing Group}
}

@article{duchi11a,
  title   = {Adaptive Subgradient Methods for Online Learning and Stochastic Optimization},
  author  = {Duchi, John and Hazan, Elad and Singer, Yoram},
  journal = {Journal of Machine Learning Research},
  volume  = {12},
  number  = {61},
  pages   = {2121--2159},
  year    = {2011}
}

@misc{xie2024implicit,
  title         = {Implicit Bias of AdamW: {$\ell_\infty$} Norm Constrained Optimization},
  author        = {Xie, Shuo and Li, Zhiyuan},
  year          = {2024},
  eprint        = {2404.04454},
  archivePrefix = {arXiv},
  primaryClass  = {cs.LG}
}

@inproceedings{shazeer2018adafactor,
  title     = {Adafactor: Adaptive Learning Rates with Sublinear Memory Cost},
  author    = {Shazeer, Noam and Stern, Mitchell},
  booktitle = {Proceedings of the 35th International Conference on Machine Learning},
  series    = {Proceedings of Machine Learning Research},
  volume    = {80},
  pages     = {4596--4604},
  year      = {2018},
  editor    = {Dy, Jennifer and Krause, Andreas},
  publisher = {PMLR},
  month     = jul
}

@inproceedings{anil2019memory,
  title     = {Memory Efficient Adaptive Optimization},
  author    = {Anil, Rohan and Gupta, Vineet and Koren, Tomer and Singer, Yoram},
  booktitle = {Advances in Neural Information Processing Systems},
  volume    = {32},
  pages     = {9746--9755},
  year      = {2019}
}

@misc{kingma2014adam,
    author = {Kingma, Diederik P. and Ba, Jimmy},
    title = {Adam: A Method for Stochastic Optimization},
    publisher = {arXiv},
    year = {2014},
}

@inproceedings{loshchilov2019adamw,
    title={Decoupled Weight Decay Regularization},
    author={Ilya Loshchilov and Frank Hutter},
    booktitle={International Conference on Learning Representations},
    year={2019},
}

@inproceedings{luo2023came,
  title     = {{CAME}: Confidence-guided Adaptive Memory Efficient Optimization},
  author    = {Luo, Yang and Ren, Xiaozhe and Zheng, Zangwei and Jiang, Zhuo and Jiang, Xin and You, Yang},
  booktitle = {Proceedings of the 61st Annual Meeting of the Association for Computational Linguistics (Volume 1: Long Papers)},
  pages     = {4442--4453},
  year      = {2023},
  month     = jul,
  address   = {Toronto, Canada},
  publisher = {Association for Computational Linguistics},
  doi       = {10.18653/v1/2023.acl-long.243}
}

@misc{lv2023adalomo,
  title         = {AdaLomo: Low-memory Optimization with Adaptive Learning Rate},
  author        = {Lv, Kai and Yan, Hang and Guo, Qipeng and Lv, Haijun and Qiu, Xipeng},
  year          = {2023},
  eprint        = {2310.10195},
  archivePrefix = {arXiv},
  primaryClass  = {cs.LG}
}

@inproceedings{wang2023closing,
  title={Closing the gap between the upper bound and lower bound of Adam's iteration complexity},
  author={Wang, Bohan and Fu, Jingwen and Zhang, Huishuai and Zheng, Nanning and Chen, Wei},
  booktitle={Thirty-seventh Conference on Neural Information Processing Systems},
  year={2023}
}

@inproceedings{wang2024provable,
  title={Provable adaptivity of adam under non-uniform smoothness},
  author={Wang, Bohan and Zhang, Yushun and Zhang, Huishuai and Meng, Qi and Sun, Ruoyu and Ma, Zhi-Ming and Liu, Tie-Yan and Luo, Zhi-Quan and Chen, Wei},
  booktitle={Proceedings of the 30th ACM SIGKDD Conference on Knowledge Discovery and Data Mining},
  pages={2960--2969},
  year={2024}
}

@article{li2023convergence,
  title={Convergence of adam under relaxed assumptions},
  author={Li, Haochuan and Rakhlin, Alexander and Jadbabaie, Ali},
  journal={Advances in Neural Information Processing Systems},
  volume={36},
  pages={52166--52196},
  year={2023}
}

@inproceedings{zhang2025adams,
  title     = {{A}dam{S}: Momentum Itself Can Be A Normalizer for {LLM} Pretraining and Post-training},
  author    = {Zhang, Huishuai and Wang, Bohan and Chen, Luoxin},
  booktitle = {Proceedings of the 2025 Conference on Empirical Methods in Natural Language Processing},
  pages     = {10719--10738},
  year      = {2025},
  month     = nov,
  address   = {Suzhou, China},
  publisher = {Association for Computational Linguistics},
  doi       = {10.18653/v1/2025.emnlp-main.543}
}

@misc{modded_nanogpt_2024,
  author       = {Keller Jordan and Jeremy Bernstein and Brendan Rappazzo and
                  @fernbear.bsky.social and Boza Vlado and You Jiacheng and
                  Franz Cesista and Braden Koszarsky and @Grad62304977},
  title        = {modded-nanogpt: Speedrunning the NanoGPT baseline},
  year         = {2024},
  url          = {https://github.com/KellerJordan/modded-nanogpt}
}

@article{zhang2024adam,
  title     = {Adam-mini: Use Fewer Learning Rates To Gain More},
  author    = {Zhang, Yushun and Chen, Congliang  and Li, Ziniu and Ding, Tian and Wu, Chenwei and Kingma, Diederik P and Ye, Yinyu and Luo, Zhi-Quan and Sun, Ruoyu},
  booktitle = {arXiv preprint arXiv:2406.16793},
  year      = {2024},
}

@inproceedings{
   liang2025torchtitan,
   title={TorchTitan: One-stop PyTorch native solution for production ready {LLM} pretraining},
   author={Wanchao Liang and Tianyu Liu and Less Wright and Will Constable and Andrew Gu and Chien-Chin Huang and Iris Zhang and Wei Feng and Howard Huang and Junjie Wang and Sanket Purandare and Gokul Nadathur and Stratos Idreos},
   booktitle={The Thirteenth International Conference on Learning Representations},
   year={2025},
   url={https://openreview.net/forum?id=SFN6Wm7YBI}
}

@article{imagenet15russakovsky,
    Author = {Olga Russakovsky and Jia Deng and Hao Su and Jonathan Krause and Sanjeev Satheesh and Sean Ma and Zhiheng Huang and Andrej Karpathy and Aditya Khosla and Michael Bernstein and Alexander C. Berg and Li Fei-Fei},
    Title = { {ImageNet Large Scale Visual Recognition Challenge} },
    Year = {2015},
    journal   = {International Journal of Computer Vision (IJCV)},
    doi = {10.1007/s11263-015-0816-y},
    volume={115},
    number={3},
    pages={211-252}
}

@misc{liu2025muonscalablellmtraining,
      title={Muon is Scalable for LLM Training}, 
      author={Jingyuan Liu and Jianlin Su and Xingcheng Yao and Zhejun Jiang and Guokun Lai and Yulun Du and Yidao Qin and Weixin Xu and Enzhe Lu and Junjie Yan and Yanru Chen and Huabin Zheng and Yibo Liu and Shaowei Liu and Bohong Yin and Weiran He and Han Zhu and Yuzhi Wang and Jianzhou Wang and Mengnan Dong and Zheng Zhang and Yongsheng Kang and Hao Zhang and Xinran Xu and Yutao Zhang and Yuxin Wu and Xinyu Zhou and Zhilin Yang},
      year={2025},
      eprint={2502.16982},
      archivePrefix={arXiv},
      primaryClass={cs.LG},
      url={https://arxiv.org/abs/2502.16982}, 
}

@misc{zhang2024implicitbiasadamseparable,
      title={The Implicit Bias of Adam on Separable Data}, 
      author={Chenyang Zhang and Difan Zou and Yuan Cao},
      year={2024},
      eprint={2406.10650},
      archivePrefix={arXiv},
      primaryClass={stat.ML},
      url={https://arxiv.org/abs/2406.10650}, 
}

@misc{chen2025lionsecretlysolvesconstrained,
      title={Lion Secretly Solves Constrained Optimization: As Lyapunov Predicts}, 
      author={Lizhang Chen and Bo Liu and Kaizhao Liang and Qiang Liu},
      year={2025},
      eprint={2310.05898},
      archivePrefix={arXiv},
      primaryClass={cs.LG},
      url={https://arxiv.org/abs/2310.05898}, 
}

\appendix
\newpage
\appendix
\onecolumn
\section{Experimental Details}
\label{app:exp_details}

\paragraph{Software and data availability.}
All experiments in this paper are conducted using widely adopted open-source training repositories and standard public datasets with strong community support. To facilitate reproducibility, we next provide detailed descriptions of the training setups and hyperparameter configurations used in each experiment. Our \nameA{} implementation (including scripts and configurations needed to reproduce the reported results) is available at \codeurl.


\subsection{Hardware and Software Environment}

All experiments are conducted with Python 3.10+ and PyTorch 2.0+ (CUDA 11.8+). For GPT-2 experiments, we use 4 NVIDIA A100 80GB GPUs with DistributedDataParallel (DDP). For Llama2-7B pretraining and Llama-3.1-8B supervised fine-tuning, we use 8 NVIDIA RTX PRO 6000 GPUs with FSDP2 (Fully Sharded Data Parallel) for distributed training. SiT-B/2 pretraining uses 4 NVIDIA RTX PRO GPUs. All experiments use mixed precision training with bfloat16 when supported; otherwise we use float32. We enable PyTorch 2.0's \texttt{torch.compile} to improve training efficiency.

\subsection{GPT-2 Pretraining Experiments}
\label{app:gpt2_details}

We start from the modded-nanogpt codebase by Keller Jordan~\cite{modded_nanogpt_2024} and train GPT-2 models of four sizes: Small (124M parameters), Medium (355M parameters), Large (770M parameters), and XL (1.5B parameters). The Small model has $n_{\text{layer}}=12$, $n_{\text{head}}=12$, and $n_{\text{embd}}=768$. The Medium model has $n_{\text{layer}}=24$, $n_{\text{head}}=16$, and $n_{\text{embd}}=1024$. The Large model has $n_{\text{layer}}=36$, $n_{\text{head}}=20$, and $n_{\text{embd}}=1280$. The XL model has $n_{\text{layer}}=48$, $n_{\text{head}}=25$, and $n_{\text{embd}}=1600$. We use the standard GPT-2 architecture without bias terms in LayerNorm and Linear layers, and dropout is disabled for pretraining.

We train on the OpenWebText dataset, tokenized with the standard GPT-2 tokenizer. The context length is set to 1024 tokens. We use a micro-batch size of 12 per GPU with gradient accumulation over 40 steps. We train the Small, Medium, and Large models for 20,000 steps, while the XL model is trained for 10,000 steps. The maximum learning rate is set to $6 \times 10^{-4}$ with a linear warmup for 2,000 steps, followed by a cosine decay schedule that reduces the learning rate to $6 \times 10^{-5}$. Weight decay is set to $0.1$ for all optimizers, and gradient clipping is applied at 1.0. We evaluate every 2,000 steps using 200 evaluation iterations.

For all optimizers, we use the first-order momentum $\beta_1=0.9$ to align with the Adam baseline for fair comparison. AdamW uses $\beta_2=0.95$ and $\epsilon=10^{-8}$. Muon uses a single momentum coefficient $\beta=0.95$ and performs Newton-Schulz orthogonalization with $K=5$ iterations. OLion uses $\beta_1=0.95$ and $\beta_2=0.98$ for its double-momentum scheme, with Newton-Schulz steps $K=5$. AdaMuon uses $\beta=0.95$ for momentum, and $K=5$ Newton-Schulz steps.

\subsection{Llama-7B Pretraining}
\label{app:llama7b_details}

We modified on the codebase of Adam-mini~\cite{zhang2024adam} and train the Llama-2-7B architecture (7 billion parameters) using the standard Llama-2 configuration with RMSNorm, SwiGLU activation, and rotary positional embeddings (RoPE). We use FSDP2 (Fully Sharded Data Parallel) for distributed training across multiple nodes. The effective global batch size is set to 4M tokens. Specifically,  with a context length of 4096 tokens, a local batch size of 2, and gradient accumulation step of 64. We train for a total of 8192 steps. The maximum learning rate is $3 \times 10^{-4}$ with a linear warmup for 2,000 steps, followed by a linear decay schedule that gradually reduce the learning rate to $3 \times 10^{-5}$. Weight decay is set to $0.1$ and gradient clipping is applied at 1.0. All training is conducted in bfloat16 mixed precision. Validation loss is computed every 100 intervals during training.

We use large-scale text datasets suitable for billion-parameter models, following the standard Llama-2 data preprocessing pipeline. Tokenization is performed using the SentencePiece tokenizer, consistent with the original Llama-2 setup. For optimizer hyperparameters, AdamW uses $\beta_1=0.9$, $\beta_2=0.95$, and $\epsilon=10^{-8}$. Muon and AdaMuon use $\beta=0.95$ with Newton-Schulz steps $K=5$. OLion uses $\beta_1=0.95$, $\beta_2=0.98$, and Newton-Schulz steps $K=5$. 

\subsection{SiT-B/2 Pretraining}
\label{app:sit_details}

We train SiT-B/2 (Scalable Transformer for Diffusion Models) using the standard SiT architecture configuration. The training dataset is ImageNet-1K~\cite{imagenet15russakovsky} with images at $256 \times 256$ resolution, using the standard ImageNet augmentation pipeline. We use a learning rate of $1 \times 10^{-4}$ and train for 400,000 steps with a cosine decay learning rate schedule. The batch size is adjusted per GPU memory constraints. Weight decay is set to $0.1$ and gradient clipping is applied at 1.0. Training is conducted in bfloat16 or float16 mixed precision. We track both projection loss and denoising loss separately, using the standard diffusion projection loss and denoising loss formulations. For optimizer hyperparameters, Muon uses $\beta=0.95$ with Newton-Schulz steps $K=5$, and OLion uses $\beta_1=0.95$, $\beta_2=0.98$, and Newton-Schulz steps $K=5$.

\section{Diagonal-isotropy Verification for a Random Gaussian Model}
\label{app:random-diagonal-isotropy}

In this section we introduce a standard random model for the singular vectors $(\mathbf U,\mathbf V)$, and show that the diagonal-isotropy condition in Assumption~\ref{assm:diag-balance} holds with high probability with an explicit parameter~$\varepsilon$.

\subsection{Random model}
\label{sec:random-model}

Let $r\ge 2$ and $d_1,d_2\ge r$. We generate $(\mathbf U,\mathbf V)$ as follows:
\begin{enumerate}
\item Sample $\mathbf A\in\mathbb R^{d_1\times r}$ and $\mathbf B\in\mathbb R^{d_2\times r}$ with i.i.d.\ $\mathcal N(0,1)$ entries, independently.
\item Let $\mathbf U=\mathrm{qf}(\mathbf A)\in\mathbb R^{d_1\times r}$ and $\mathbf V=\mathrm{qf}(\mathbf B)\in\mathbb R^{d_2\times r}$ be the orthonormal $Q$-factors in the QR decompositions of $\mathbf A$ and $\mathbf B$.
\end{enumerate}
Then $\mathbf U$ and $\mathbf V$ are independent Haar-distributed random matrices on the Stiefel manifolds (i.e., their column spaces are uniformly random), and satisfy $\mathbf U^\top\mathbf U=\mathbf I_r$, $\mathbf V^\top\mathbf V=\mathbf I_r$.

Define
\[
\mathbf Q := \mathbf U\mathbf V^\top\in\mathbb R^{d_1\times d_2},
\qquad
\mathbf Z := \sign(\mathbf Q) \ \ \text{(entrywise, with $\sign(0)=0$)}.
\]
We also define the diagonal correlation vector
\[
\mathbf m := \diag(\mathbf U^\top \mathbf Z \mathbf V)\in\mathbb R^r,
\qquad
\bar m := \frac{1}{r}\mathbf 1^\top \mathbf m.
\]
A key identity is
\begin{equation}
\label{eq:trace-identity}
\mathbf 1^\top \mathbf m
=
\tr(\mathbf U^\top \mathbf Z \mathbf V)
=
\langle \mathbf U\mathbf V^\top,\mathbf Z\rangle
=
\langle \mathbf Q,\sign(\mathbf Q)\rangle
=
\|\mathbf Q\|_1,
\qquad
\Rightarrow\quad
\bar m=\frac{\|\mathbf Q\|_1}{r}.
\end{equation}

\subsection{Main claim: diagonal-isotropy holds with high probability}
\label{sec:main-claim}

\begin{theorem}[Diagonal-isotropy for random Gaussian singular vectors]
\label{thm:diag-isotropy-random}
Let $(\mathbf U,\mathbf V)$ follow the random model in Section~\ref{sec:random-model}, and define
$\mathbf Q=\mathbf U\mathbf V^\top$ and $\mathbf Z=\sign(\mathbf Q)$.
Then there exist absolute constants $c,C>0$ such that, with probability at least
\[
1-2r^{-10}-2\exp(-c\,d_1d_2),
\]
the following two events hold simultaneously:
\begin{align}
\label{eq:m-fluct}
\Big\|
\diag(\mathbf U^\top \mathbf Z \mathbf V) - \tfrac{\|\mathbf Q\|_1}{r}\mathbf 1
\Big\|_2
&\le
C\sqrt{r\log r},\\[3pt]
\label{eq:Q1-lower}
\|\mathbf Q\|_1
&\ge
c\sqrt{r\,d_1d_2}.
\end{align}
Consequently, on the same event,
Assumption~\ref{assm:diag-balance} holds with
\begin{equation}
\label{eq:epsilon-scaling}
\varepsilon
=
\frac{C}{c}\sqrt{\frac{r\log r}{d_1d_2}}.
\end{equation}
In particular, if $d_1d_2\gg r\log r$, then $\varepsilon=o(1)$.
\end{theorem}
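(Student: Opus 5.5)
The plan is to prove the two bounds \eqref{eq:m-fluct} and \eqref{eq:Q1-lower} separately, take a union bound over their failure events, and then read off $\varepsilon$. For the last step, note that the right-hand side of Assumption~\ref{assm:diag-balance} is $\varepsilon\|\mathbf Q\|_1/\sqrt r$. On the event \eqref{eq:Q1-lower} this is at least $\varepsilon c\sqrt{d_1d_2}$. So \eqref{eq:m-fluct} implies the assumption as soon as $\varepsilon c\sqrt{d_1d_2}\ge C\sqrt{r\log r}$, which is exactly \eqref{eq:epsilon-scaling}.

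\textbf{Lower bound on $\|\mathbf Q\|_1$.} I would go through the Gaussian factors. Write $\mathbf U=\mathbf A(\mathbf A^\top\mathbf A)^{-1/2}$ and $\mathbf V=\mathbf B(\mathbf B^\top\mathbf B)^{-1/2}$; this choice of orthonormal factor gives the same Haar law, and $\mathbf Q$ depends only on the column spaces, so it is unaffected. For the entries of $\mathbf Q$, a typical entry $Q_{ij}=\langle \mathbf U_{i,:},\mathbf V_{j,:}\rangle$ has second moment $r/(d_1d_2)$, since $\|\mathbf Q\|_F^2=r$. It is approximately Gaussian, so $\mathbb E|Q_{ij}|\approx\sqrt{2/\pi}\,\sqrt{r/(d_1d_2)}$, which gives $\mathbb E\|\mathbf Q\|_1\gtrsim\sqrt{r d_1d_2}$. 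A clean route to the lower bound on the mean is the inequality $\mathbb E|X|\ge(\mathbb E X^2)^{3/2}/(\mathbb E X^4)^{1/2}$, using Haar moment formulas for the fourth moment. To pass from the mean to a high-probability statement, I would use Gaussian (or Stiefel) concentration for the Lipschitz map $(\mathbf A,\mathbf B)\mapsto\|\mathbf Q\|_1$. This map is $\sqrt{d_1d_2}$-Lipschitz in $\mathbf Q$ (Frobenius), and $\mathbf Q$ is Lipschitz in $(\mathbf A,\mathbf B)$ after conditioning on the well-conditioned event $\sigma_{\min}(\mathbf A)\gtrsim\sqrt{d_1}$, $\sigma_{\min}(\mathbf B)\gtrsim\sqrt{d_2}$. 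The failure probability of that event produces the exponential term in the stated probability.

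\textbf{Fluctuation of the diagonal correlations.} The first step is symmetry. For any signed permutation matrix $\mathbf P\in\mathbb R^{r\times r}$, the pair $(\mathbf U\mathbf P,\mathbf V\mathbf P)$ has the same law as $(\mathbf U,\mathbf V)$ and leaves $\mathbf Q$, hence $\mathbf Z$, unchanged. Its effect on $\mathbf m$ is only to permute the entries $m_k=\mathbf u_k^\top\mathbf Z\mathbf v_k$. Hence the $m_k$ are exchangeable, and each deviation $m_k-\bar m$ has mean zero. To control the size of the deviations, I would use a Hermite-type linearization of the sign. Set $s=\sqrt{r/(d_1d_2)}$ and write
\[
\mathbf Z=\kappa\,\mathbf Q/s+\mathbf R,\qquad \kappa=\sqrt{2/\pi}.
\]
The linear part contributes $\kappa\,\mathbf u_k^\top\mathbf Q\mathbf v_k/s=\kappa/s$ to every $m_k$, the same amount for each $k$, so it cancels exactly in $m_k-\bar m$. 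What remains is to bound $\mathbf u_k^\top\mathbf R\mathbf v_k-\frac1r\tr(\mathbf U^\top\mathbf R\mathbf V)$. Here $\|\mathbf R\|_F\lesssim\sqrt{d_1d_2}$, and $\mathbf R$ is asymptotically uncorrelated with every rank-one direction $\mathbf u_k\mathbf v_k^\top$, since it collects the higher-order Hermite components. A single such bilinear form should therefore behave like $\|\mathbf R\|_F/\sqrt{d_1d_2}=O(1)$ with sub-Gaussian tails. A union bound over $k\le r$ at level $r^{-11}$ then gives $|m_k-\bar m|\le C\sqrt{\log r}$ for all $k$, and summing squares over $k$ gives \eqref{eq:m-fluct}.

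\textbf{Main obstacle.} The hard step is making the last heuristic rigorous. $\mathbf R$ depends on $(\mathbf u_k,\mathbf v_k)$ through the same nonsmooth sign map, so independence cannot be invoked directly. My first attempt would be a leave-one-out decomposition $\mathbf Q=\mathbf Q^{(k)}+\mathbf u_k\mathbf v_k^\top$. Only entries with $|Q^{(k)}_{ij}|\lesssim|u_{ki}v_{kj}|$ can have their sign flipped by the $k$-th term. These are few, and each flip changes $m_k$ by only $O(|u_{ki}v_{kj}|)$, so the flipped entries should contribute $O(1)$ after a small-ball estimate on $\mathbf Q^{(k)}$. For the unflipped part, one can condition on $\mathbf Q^{(k)}$ and use the residual rotational freedom of $(\mathbf u_k,\mathbf v_k)$ in the orthogonal complements, then apply Hanson--Wright to the bilinear form $\mathbf u_k^\top\sign(\mathbf Q^{(k)})\mathbf v_k$. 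Centering that form uses the same exchangeability argument as above.
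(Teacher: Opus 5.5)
\textbf{Where the proposal fails.} The overall reduction is fine, and so are the exchangeability of the $m_k$ and the exact identity $\diag(\mU^\top\mQ\mV)=\mathbf 1$, which cancels the linear Hermite term. That identity is cleaner than the paper's recentering from $\mathbb{E}m_k$ to $\bar m$. The gap is the claim that $\mathbf R$ is asymptotically uncorrelated with each $\mathbf u_k\mathbf v_k^\top$, so that $\mathbf u_k^\top\mathbf R\mathbf v_k-\frac1r\tr(\mU^\top\mathbf R\mV)=O(1)$. The sign normalizes each entry by its own conditional scale. That scale involves the row norms (leverage scores) $\|\mU_{i,:}\|$ and $\|\mV_{j,:}\|$, not the global $s$, and this creates a $k$-dependent bias.

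The bias can be computed exactly. Each row of $\mV$ has a rotation-invariant law on $\R^r$, so $\mathbb{E}[\mV_{j,:}\sign(\langle\mU_{i,:},\mV_{j,:}\rangle)\mid\mU]=\mathbb{E}|V_{11}|\,\mU_{i,:}/\|\mU_{i,:}\|$. Hence
\[
\mathbb{E}[\,m_k-\bar m\mid\mU\,]=d_2\,\mathbb{E}|V_{11}|\,(a_k-\bar a),\qquad a_k:=\sum_{i=1}^{d_1}\frac{U_{ik}^2}{\|\mU_{i,:}\|},\quad \bar a:=\frac1r\sum_{k=1}^r a_k,
\]
with $d_2\mathbb{E}|V_{11}|\approx\sqrt{2d_2/\pi}$. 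The difference $a_k-\bar a$ vanishes only when the leverage scores are constant (e.g.\ $r=d_1$). Otherwise it is a nondegenerate random variable, with standard deviation of order $1/r$ when $d_1\gg r$. For example, with $r=2$, $d_1=3$ and $\mU=[\mathbf e_1,(\mathbf e_2+\mathbf e_3)/\sqrt2]$, it equals $(1-\sqrt2)/2$, and it stays away from $0$ nearby.

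So $|m_k-\bar m|$ is typically of order $\sqrt{d_2}/r$, and symmetrically $\sqrt{d_1}/r$. This gives $\|\mathbf m-\bar m\mathbf 1\|_2\gtrsim\sqrt{(d_1+d_2)/r}$, which exceeds $C\sqrt{r\log r}$ once $d_1+d_2\gg r^2\log r$. Concretely, take $r=2$, $d_1=3$, $d_2\to\infty$. Then $m_1-\bar m=\sqrt{2d_2/\pi}\,(a_1-\bar a)+O_P(1)$ with $a_1\neq\bar a$ almost surely, so \eqref{eq:m-fluct} fails for every fixed $C$ with probability tending to one.

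The leave-one-out repair does not help. Conditionally on $\mQ^{(k)}$, the form $\mathbf u_k^\top\sign(\mQ^{(k)})\mathbf v_k$ has mean zero (flip $\mathbf u_k\mapsto-\mathbf u_k$). So the flipped entries carry the entire $\bar m\asymp\sqrt{d_1d_2/r}$. They are a $\Theta(r^{-1/2})$ fraction of all entries, not "a few", and their dependence on $k$ is exactly the leverage effect above.

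\textbf{Consequences.} This is not a repairable technicality: \eqref{eq:m-fluct}, and with it the rate \eqref{eq:epsilon-scaling}, is false in the stated generality. The paper's Step~3 has the same hole. It treats the $d_1d_2$ summands of $m_k$ as uncorrelated, ignoring the $\asymp d_1d_2(d_1+d_2)$ pairs that share a row or column. Their covariances are exactly the conditional-mean term above. Its claimed absolute-constant sub-Gaussian tails for $m_k$ are contradicted by $\Var(m_k)\ge\Var(\mathbb{E}[m_k\mid\mU])\gtrsim d_2/r^2$ when $d_1\gg r$.

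Your route is genuinely different from the paper's (exact linear cancellation plus leave-one-out, versus tanh smoothing plus Gaussian concentration). It can give the stated bound only when the leverage scores are flat (e.g.\ $r=d_1=d_2$) or when $d_1+d_2\lesssim r^2\log r$. Heuristically, the effect adds a term of order $(r\min(d_1,d_2))^{-1/2}$ to $\varepsilon$. So $\varepsilon=o(1)$ may survive, but \eqref{eq:epsilon-scaling} does not.

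\textbf{A secondary point on \eqref{eq:Q1-lower}.} Your conditioning event $\sigma_{\min}(\mA)\gtrsim\sqrt{d_1}$ needs $d_1\ge Cr$ and fails with probability of order $e^{-c\min(d_1,d_2)}$. On that event, $\|\mQ\|_1$ is about $\sqrt{d_1+d_2}$-Lipschitz in $(\mA,\mB)$. Your argument therefore gives tails of order $\exp(-c\,r\,d_1d_2/(d_1+d_2))$, not the stated $\exp(-c\,d_1d_2)$.
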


\subsection{Proof of Theorem~\ref{thm:diag-isotropy-random}}
\label{sec:proof-diag-isotropy}

\paragraph{Step 1: Centering factor.}
By Equation~(\ref{eq:trace-identity}), we have the exact relation
\[
\bar m=\frac{1}{r}\mathbf 1^\top \diag(\mathbf U^\top \mathbf Z \mathbf V)=\frac{\|\mathbf Q\|_1}{r},
\]
which matches the centering used in Assumption~\ref{assm:diag-balance}.
Moreover, by Haar symmetry, the coordinates of $\mathbf m$ are exchangeable (permuting columns of $\mathbf U$ and $\mathbf V$ does not change the joint law), so it is natural that each coordinate concentrates at the same scale.

\paragraph{Step 2: A lower bound on $\|\mathbf Q\|_1$.}
Under independent Haar $\mathbf U,\mathbf V$, each entry
\[
Q_{ij}=\sum_{k=1}^r U_{ik}V_{jk}
\]
is mean-zero and, in the high-dimensional regime, behaves approximately as a Gaussian with variance $\Var(Q_{ij})\approx r/(d_1d_2)$ (a standard consequence of rotational invariance and the fact that each $U_{ik},V_{jk}$ is of size $\Theta(d_1^{-1/2}),\Theta(d_2^{-1/2})$).
Thus $\mathbb E|Q_{ij}|\approx \sqrt{\tfrac{2}{\pi}}\sqrt{\tfrac{r}{d_1d_2}}$ and
\[
\mathbb E\|\mathbf Q\|_1=\sum_{i,j}\mathbb E|Q_{ij}|
\approx \sqrt{\tfrac{2}{\pi}}\sqrt{r\,d_1d_2}.
\]
Furthermore, $\|\mathbf Q\|_1$ concentrates around its mean (as a Lipschitz function of the underlying Gaussian matrices $(\mathbf A,\mathbf B)$ away from negligible ill-conditioning events), yielding the high-probability bound Equation~(\ref{eq:Q1-lower}) for an absolute constant $c>0$.

\paragraph{Step 3: Concentration of diagonal correlations.}
For each $k\in[r]$,
\[
m_k=(\mathbf U^\top \mathbf Z \mathbf V)_{kk}
=
u_k^\top \mathbf Z v_k
=
\sum_{i=1}^{d_1}\sum_{j=1}^{d_2} U_{ik}V_{jk}\,\sign(Q_{ij}).
\]
Heuristically, $(Q_{ij},U_{ik}V_{jk})$ is close to a jointly Gaussian pair with
\[
\Var(Q_{ij})\approx \frac{r}{d_1d_2},
\qquad
\Var(U_{ik}V_{jk})\approx \frac{1}{d_1d_2},
\qquad
\Cov(Q_{ij},U_{ik}V_{jk})\approx \frac{1}{d_1d_2}.
\]
In particular, each summand has variance of order $1/(d_1d_2)$ and the sum over $d_1d_2$ terms yields constant-scale fluctuations for $m_k$. Formally, one may replace $\sign(\cdot)$ by a smooth approximation (e.g.\ $\tanh(\cdot/\tau)$), apply Gaussian concentration to obtain sub-Gaussian tails for the smoothed $m_k$, and then remove the smoothing using anti-concentration of $Q_{ij}$ near zero. This yields an absolute constant $c_0>0$ such that
\[
\mathbb P\!\left(|m_k-\mathbb E m_k|\ge t\right)\le 2\exp(-c_0 t^2),
\qquad \forall t\ge 0.
\]
Taking a union bound over $k=1,\dots,r$ gives, with probability at least $1-2r^{-10}$,
\[
\max_{k\in[r]} |m_k-\mathbb E m_k|
\ \le\
C_0\sqrt{\log r}
\]
for an absolute constant $C_0>0$, hence
\[
\|\mathbf m-\mathbb E\mathbf m\|_2
\le
\sqrt r \max_{k\in[r]}|m_k-\mathbb E m_k|
\le
C_0\sqrt{r\log r}.
\]
Finally, since $\bar m=\|\mathbf Q\|_1/r$ is the empirical mean of $(m_k)$ by Equation~(\ref{eq:trace-identity}) and $\|\mathbf Q\|_1$ concentrates (Step~2), the deviation between centering by $\mathbb E m_k$ and by $\bar m$ is of the same (or smaller) order. Absorbing constants yields Equation~(\ref{eq:m-fluct}) with an absolute constant $C$.

\paragraph{Step 4: Conclude Assumption~\ref{assm:diag-balance}.}
On the event where Equation~(\ref{eq:m-fluct}) and Equation~(\ref{eq:Q1-lower}) both hold,
\[
\Big\|
\diag(\mathbf U^\top \sign(\mathbf U\mathbf V^\top)\mathbf V)-\frac{\|\mathbf U\mathbf V^\top\|_1}{r}\mathbf 1
\Big\|_2
\le
C\sqrt{r\log r}
=
\frac{C}{c}\sqrt{\frac{r\log r}{d_1d_2}}\cdot \frac{\|\mathbf Q\|_1}{\sqrt r},
\]
which is exactly Assumption~\ref{assm:diag-balance} with $\varepsilon$ as in Equation~(\ref{eq:epsilon-scaling}).
This completes the proof.

\section{Diagonal-Isotropy Verification Along the Training Trajectory}
\label{app:diagonal-isotropy-verification}
Empirically, we verify that the matrix updates along the training trajectory satisfy Assumption~\ref{assm:diag-balance} with small values of $\varepsilon$. Concretely, during GPT-2 small (124M) pretraining we record the value of $\varepsilon$ (i.e., the smallest constant such that the diagonal-isotropy inequality holds) for the gradient-related matrix $\widetilde{\mathbf{G}}_t$ at each update, under three optimizers: AdamW, Muon, and OLion. The results are shown in Figure~\ref{fig:diag-isotropy-1} and Figure~\ref{fig:diag-isotropy-2}.

\begin{figure}[htbp]
  \centering
  \begin{subfigure}[t]{0.48\linewidth}
    \centering
    \includegraphics[width=\linewidth]{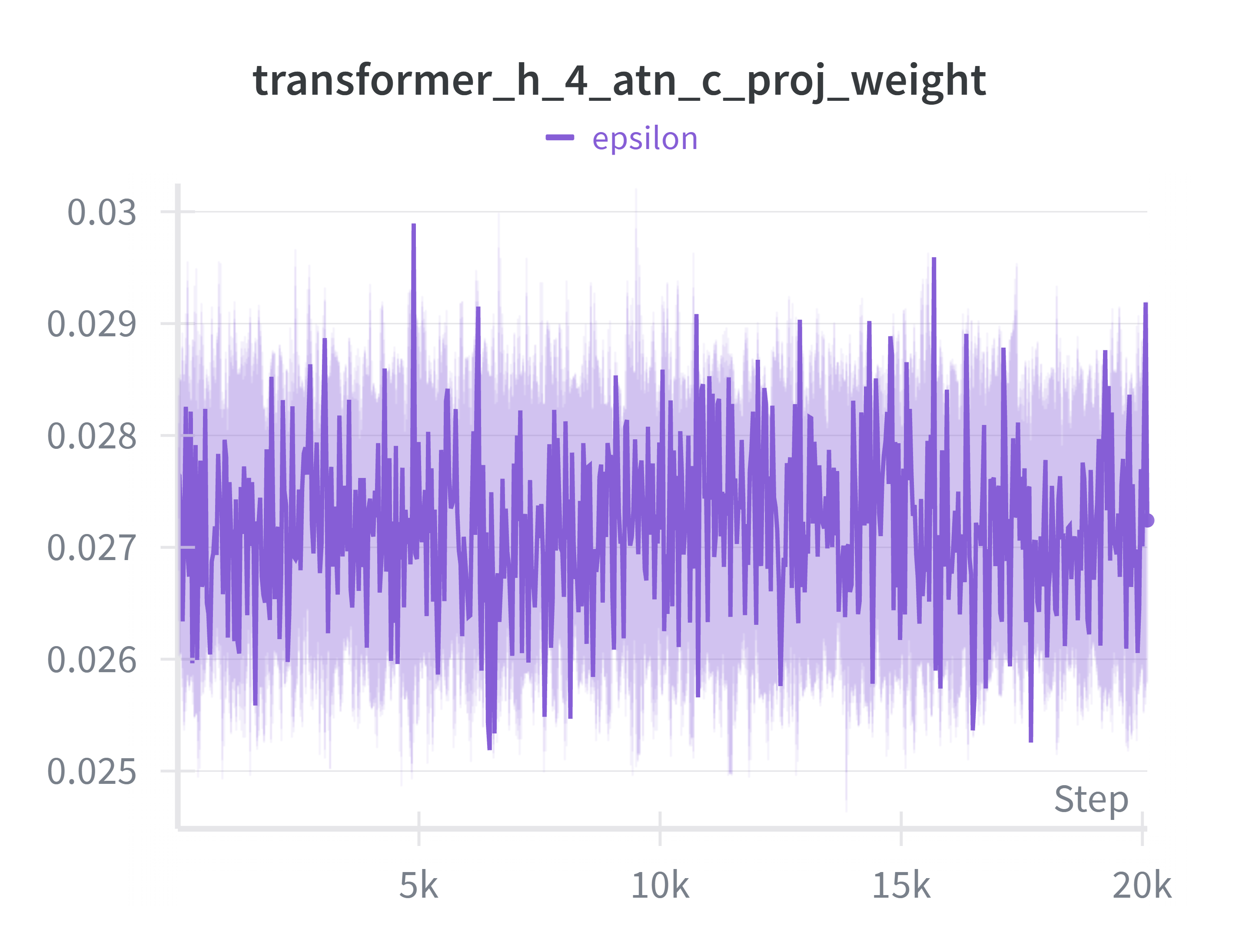}
    \caption{Attention matrix (layer 4), shape $768\times 768$.}
    \label{fig:diag-isotropy-1}
  \end{subfigure}
  \hfill
  \begin{subfigure}[t]{0.48\linewidth}
    \centering
    \includegraphics[width=\linewidth]{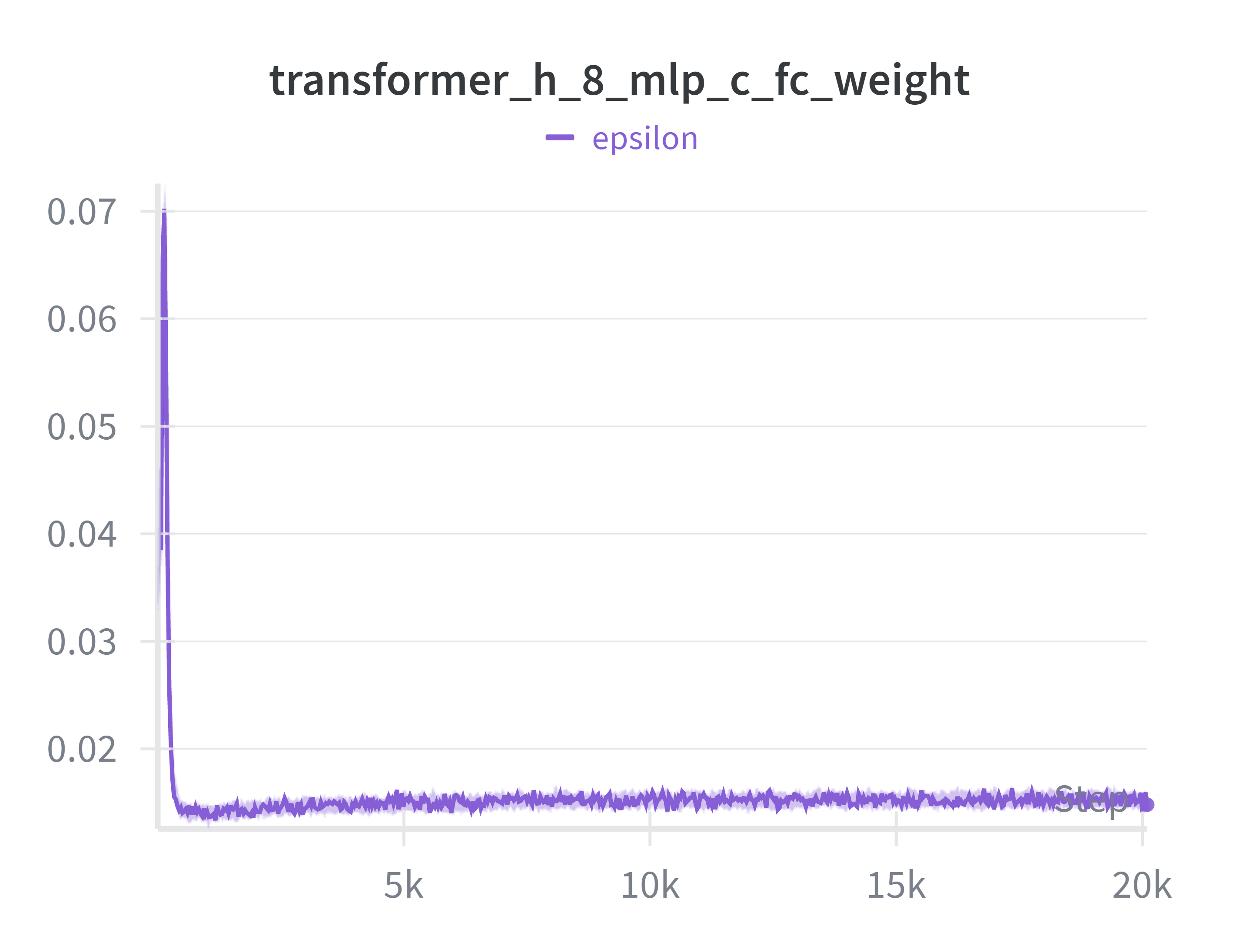}
    \caption{MLP projection matrix (layer 8), shape $3,072\times 768$.}
    \label{fig:diag-isotropy-2}
  \end{subfigure}
  \caption{The $\varepsilon$ values are consistently small for different matrices across the training procedure.}
  \label{fig:diag-isotropy}
\end{figure}


\section{Proofs of Lemma~\ref{lem:upper-bound}} \label{app:proof-lemma-upperbound}

\begin{proof}
Let $\mQ := \mU\mV^\top$ and $\mS := \sign(\mQ)$ (entrywise, with $\sign(0)=0$). Since
$\mZ=\mU\boldsymbol{\Sigma}\mV^\top$ and $\alpha=\tr(\boldsymbol{\Sigma})/r$, we have
\[
\mZ-\alpha \mU\mV^\top = \mU(\boldsymbol{\Sigma}-\alpha \mI)\mV^\top.
\]
Using the Frobenius inner product $\langle \mA,\mB\rangle=\tr(\mA^\top \mB)$ and the orthonormality
$\mU^\top \mU=\mI_r$, $\mV^\top \mV=\mI_r$, we obtain
\begin{align*}
\big\langle \mZ-\alpha \mU\mV^\top,\ \mS\big\rangle
= \big\langle \mU(\boldsymbol{\Sigma}-\alpha \mI)\mV^\top,\ \mS\big\rangle = \tr\!\Big((\boldsymbol{\Sigma}-\alpha \mI)\,\mU^\top \mS\,\mV\Big).
\end{align*}
Since $\boldsymbol{\Sigma}-\alpha \mI$ is diagonal, only the diagonal entries of $\mU^\top \mS\mV$
contribute to the trace. Define
\[
\mathbf{m}:=\diag(\mU^\top \mS\mV)\in\mathbb{R}^r,
\qquad
\beta:=\frac{1}{r}\mathbf{1}^\top \mathbf{m}.
\]
Then
\[
\tr\!\Big((\boldsymbol{\Sigma}-\alpha \mI)\,\mU^\top \mS\,\mV\Big)
=\sum_{k=1}^r (\sigma_k-\alpha)\,m_k
= \big\langle \boldsymbol{\sigma}-\alpha\mathbf{1},\ \mathbf{m}\big\rangle,
\]
where $\boldsymbol{\sigma}=(\sigma_1,\ldots,\sigma_r)^\top$ and $m_k$ is the $k$-th entry of $\mathbf{m}$.

We next relate $\beta$ to $\|\mQ\|_1$. By cyclicity of trace,
\begin{flalign*}
\mathbf{1}^\top \mathbf{m}=\tr(\mU^\top \mS\mV)
=\tr(\mV\mU^\top \mS)=\langle \mU\mV^\top,\ \mS\rangle
=\langle \mQ,\ \sign(\mQ)\rangle
=\|\mQ\|_1,
\end{flalign*}
hence $\beta=\|\mU\mV^\top\|_1/r$.

Now decompose $\mathbf{m}=\beta\mathbf{1}+(\mathbf{m}-\beta\mathbf{1})$:
\begin{align*}
\big\langle \boldsymbol{\sigma}-\alpha\mathbf{1},\ \mathbf{m}\big\rangle
&= \beta\,\big\langle \boldsymbol{\sigma}-\alpha\mathbf{1},\ \mathbf{1}\big\rangle
+ \big\langle \boldsymbol{\sigma}-\alpha\mathbf{1},\ \mathbf{m}-\beta\mathbf{1}\big\rangle.
\end{align*}
Because $\alpha=\tr(\boldsymbol{\Sigma})/r$, we have
$\langle \boldsymbol{\sigma}-\alpha\mathbf{1},\ \mathbf{1}\rangle
=\tr(\boldsymbol{\Sigma})-\alpha r=0$, so the first term vanishes. Therefore,
\begin{flalign*}
    \Big|&\big\langle \mZ-\alpha \mU\mV^\top,\ \sign(\mU\mV^\top)\big\rangle\Big|\\
&=
\Big|\big\langle \boldsymbol{\sigma}-\alpha\mathbf{1},\ \mathbf{m}-\beta\mathbf{1}\big\rangle\Big|\\
&\le \|\boldsymbol{\sigma}-\alpha\mathbf{1}\|_2\,\|\mathbf{m}-\beta\mathbf{1}\|_2.
\end{flalign*}
Noting that $\|\boldsymbol{\sigma}-\alpha\mathbf{1}\|_2=\|\boldsymbol{\Sigma}-\alpha \mI\|_F$,
and applying Assumption~\ref{assm:diag-balance} (i.e., $\|\mathbf{m}-\beta\mathbf{1}\|_2\le
\varepsilon\,\|\mU\mV^\top\|_1/\sqrt{r}$), we conclude
\[
\Big|\big\langle \mZ-\alpha \mU\mV^\top,\ \sign(\mU\mV^\top)\big\rangle\Big|
\le
\varepsilon\,\frac{\|\mU\mV^\top\|_1}{\sqrt{r}}\,
\Big\|\boldsymbol{\Sigma}-\alpha \mI\Big\|_F,
\]
as desired.
\end{proof}

\subsection{Discussion on Lemma~\ref{lem:upper-bound}}
\paragraph{Why the cancellation-aware estimate (Lemma~\ref{lem:upper-bound}) matters.}  A naïve bound to control $\ip{\mathbf{Z}-\alpha\mathbf{Q}}{\mathbf{S}}$ is given by Cauchy-Schwarz:
\[
\big|\ip{\mathbf{Z}-\alpha\mathbf{Q}}{\mathbf{S}}\big|
\le \|\mathbf{Z}-\alpha\mathbf{Q}\|_F\,\|\mathbf{S}\|_F,
\]
but this bound ignores two key structural facts: (i) $\alpha=\tr(\boldsymbol{\Sigma})/r$ makes
$\boldsymbol{\Sigma}-\alpha\mathbf{I}$ \emph{trace} 0, so its positive and negative diagonal deviations
must cancel, and (ii) $\mathbf{S}=\sign(\mathbf{Q})$ is not an arbitrary matrix, but is tightly coupled
to $\mathbf{Q}=\Oop(\mathbf{Z})$. We exploit this structure by rewriting
\[
\ip{\mathbf{Z}-\alpha\mathbf{Q}}{\mathbf{S}}
= \tr\!\Big((\boldsymbol{\Sigma}-\alpha\mathbf{I})\,\mathbf{U}^\top\mathbf{S}\mathbf{V}\Big)
= \sum_{j=1}^{r}(\sigma_{j}-\alpha)\,m_{j}
\]
where $\mathbf{m}:=\diag(\mathbf{U}^\top\mathbf{S}\mathbf{V})$ and then centering $\mathbf{m}$ by its mean $\bar m=\| \mathbf{Q}\|_1/r$, which is admissible because
$\sum_j(\sigma_{j}-\alpha)=0$. This yields
\[
\ip{\mathbf{Z}-\alpha\mathbf{Q}}{\mathbf{S}}
= \sum_{j=1}^{r}(\sigma_{j}-\alpha)\,(m_{j}-\bar m),
\]
so the magnitude is governed by \emph{fluctuations} of the diagonal correlations $m_{j}$ around their average,
rather than by the ambient size $\|\mathbf{S}\|_F=\Theta(\sqrt{d_1d_2})$. Under the diagonal-isotropy assumption,
these fluctuations are small, giving the sharper bound
\[
\big|\ip{\mathbf{Z}-\alpha\mathbf{Q}}{\mathbf{S}}\big|
\;\lesssim\;
\|\boldsymbol{\Sigma}-\alpha\mathbf{I}\|_F\cdot \varepsilon\frac{\|\mathbf{Q}\|_1}{\sqrt{r}},
\]
which can be orders of magnitude tighter than Cauchy--Schwarz bound when $\mathbf{Q}$ is dense
(large $\|\mathbf{Q}\|_1$) and the diagonal correlations are nearly uniform.
\section{Proof of Theorem~\ref{thm:descent_stationarity_simplified}} \label{app:proof-theorem-convergence}

\begin{proof}
By $L$-smoothness (Assumption~\ref{ass:smooth}),
\[
f(\mX_{t+1})
\le
f(\mX_t)
+\ip{\mG_t}{\mX_{t+1}-\mX_t}
+\frac{L}{2}\|\mX_{t+1}-\mX_t\|_F^2.
\]
Using $\mX_{t+1}-\mX_t=-\eta_t \mS_t$ gives
\begin{equation}
\label{eq:descent_basic_simplified_pf}
f(\mX_{t+1})
\le
f(\mX_t)
-\eta_t\ip{\mG_t}{\mS_t}
+\frac{L}{2}\eta_t^2\|\mS_t\|_F^2.
\end{equation}
Next, since $\mS_t=\sign(\mQ_t)$ with $\mQ_t\in\R^{d_1\times d_2}$,
\[
\|\mS_t\|_F^2
=\|\sign(\mQ_t)\|_F^2
=d_1d_2.
\]

It remains to lower bound $\ip{\mG_t}{\mS_t}$. Let $\mG_t=\mU_t\boldsymbol{\Sigma}_t\mV_t^\top$ have rank $r_t$,
and let $\mQ_t=\Oop(\mG_t)=\mU_t\mV_t^\top$. Then
\begin{flalign*}
    &\ip{\mG_t}{\mS_t}
=\big\langle \mG_t,\ \sign(\mQ_t)\big\rangle\\
&=\Big(\big\langle \alpha_t \mQ_t,\ \sign(\mQ_t)\big\rangle +
\big\langle \mG_t-\alpha_t \mQ_t,\ \sign(\mQ_t)\big\rangle
\Big).
\end{flalign*}

The first term equals $\alpha_t \|\mQ_t\|_1$ because
$\langle \mQ_t,\sign(\mQ_t)\rangle = \|\mQ_t\|_1$.
For the second term, Lemma~\ref{lem:upper-bound} yields
\begin{flalign*}
    \Big|\big\langle \mG_t-\alpha_t \mQ_t,\ \sign(\mQ_t)\big\rangle\Big|
\le \varepsilon\,\frac{\|\mQ_t\|_1}{\sqrt{r_t}}\,  \big\|\boldsymbol{\Sigma}_t-\alpha_t \mI\big\|_F =\varepsilon\,\|\mQ_t\|_1\,\alpha_t\,\rho_t.
\end{flalign*}

Therefore,
\[
\ip{\mG_t}{\mS_t}
\ge \|\mQ_t\|_1\,\alpha_t(1-\varepsilon\rho_t)
= \Phi_t.
\]
Plugging this and $\|\mS_t\|_F^2=d_1d_2$ into Equation~(\ref{eq:descent_basic_simplified_pf}) gives
Equation~(\ref{eq:descent_phi_simplified}). Summing Equation~(\ref{eq:descent_phi_simplified}) from $t=0$ to $T-1$ and using $f(\mX_T)\ge f_{\inf}$ yields Equation~(\ref{eq:sum_phi_simplified}).
\end{proof}

\section{Implicit Bias Verification}\label{app:matrix_cases}

We sample several individual matrices from different layers and examine their singular-value and absolute-value
distributions in detail.
Figures~\ref{fig:matrix_h7_mlp}, \ref{fig:matrix_h8_mlp}, \ref{fig:matrix_h10_mlp}, and \ref{fig:matrix_h2_attn}
show representative MLP projection matrices (layers 7, 8, and 10) and an attention projection matrix (layer 2),
respectively.
Consistent with the aggregate statistics, \nameA{} produces more concentrated singular-value distributions (with lower
leading components) than AdamW, while also producing tighter absolute-value distributions (with smaller element
magnitudes) than Muon. This confirms that the combined spectral and $\ell_\infty$ biases manifest robustly across
individual layers and matrix shapes.
\begin{figure}[htbp]
    \centering
    \includegraphics[width=0.95\linewidth]{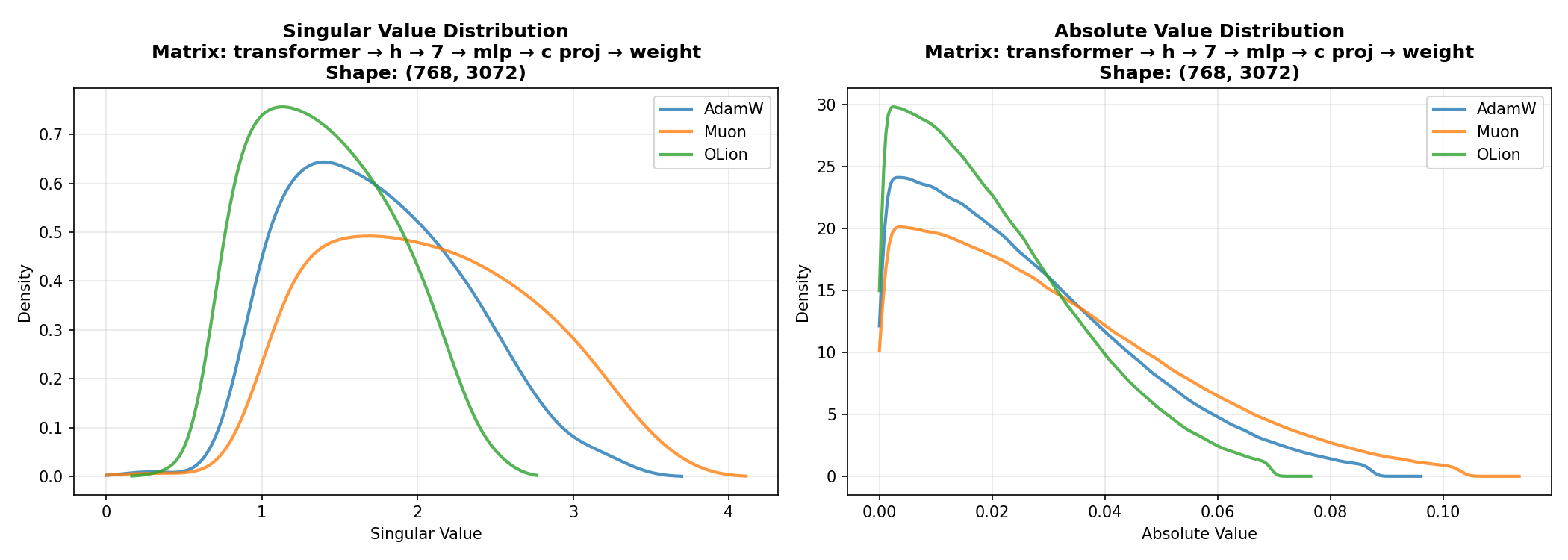}
    \caption{Singular value and absolute value distributions for the MLP projection weight matrix at layer 7 (shape $768\times 3072$).}
    \label{fig:matrix_h7_mlp}
\end{figure}

\begin{figure}[htbp]
    \centering
    \includegraphics[width=0.95\linewidth]{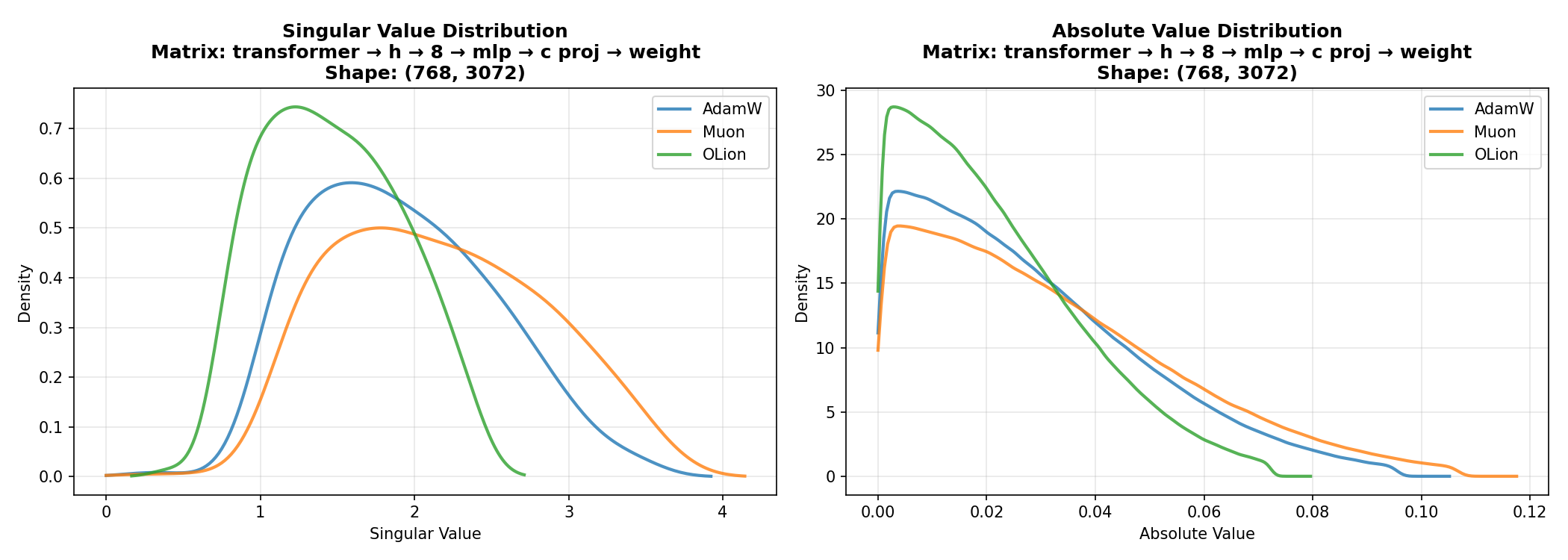}
    \caption{Singular value and absolute value distributions for the MLP projection weight matrix at layer 8 (shape $768\times 3072$).}
    \label{fig:matrix_h8_mlp}
\end{figure}

\begin{figure}[htbp]
  \centering
  \includegraphics[width=0.95\linewidth]{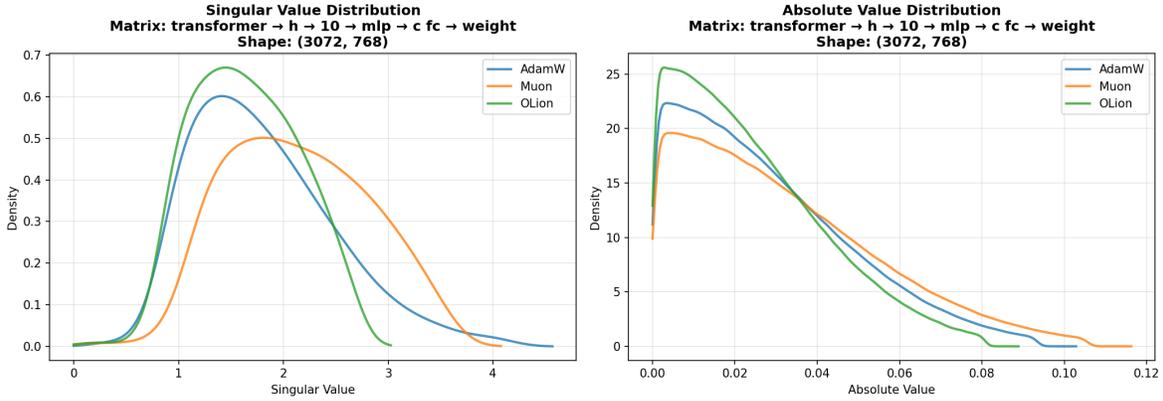}
  \caption{Singular value and absolute value distributions for the MLP projection weight matrix at layer 10 (shape $3072\times 768$).}
  \label{fig:matrix_h10_mlp}
\end{figure}

\begin{figure}[htbp]
    \centering
    \includegraphics[width=0.95\linewidth]{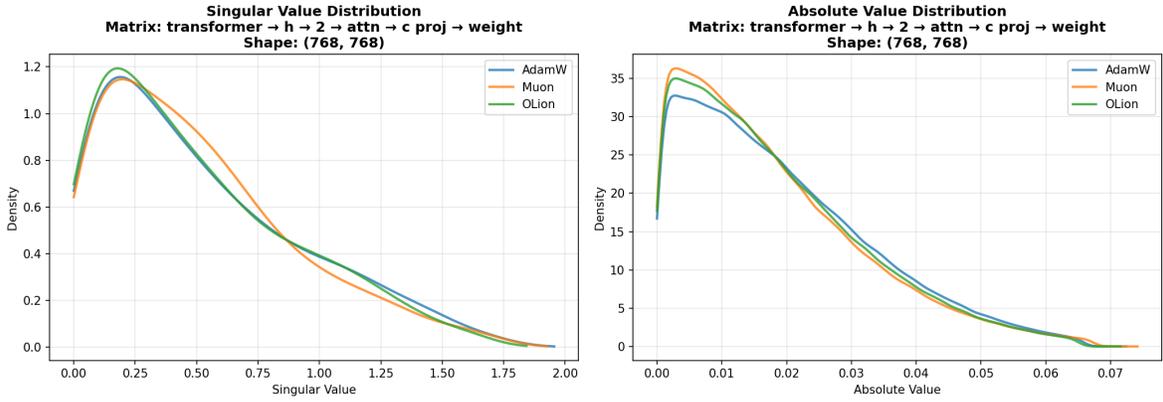}
    \caption{Singular value and absolute value distributions for the attention projection weight matrix at layer 2 (shape $768\times 768$).}
    \label{fig:matrix_h2_attn}
\end{figure}

\section{Llama-3.1-8B Supervised Fine-Tuning Details}
\label{app:sft}

We perform full parameter fine-tuning on Llama-3.1-8B model that was pretrained with AdamW, using the codebase of torchtitan~\cite{liang2025torchtitan}. We do not use LoRA or other parameter-efficient fine-tuning methods. The training dataset is MathInstruct, formatted in the standard instruction-following format for mathematical reasoning tasks. We use a learning rate of $3 \times 10^{-5}$, which is typical for supervised fine-tuning, with a linear warmup for 100 steps followed by cosine decay that gradually reduces learning rate to $3 \times 10^{-6}$. The trainer is initialized with a local batch size of 8 per GPU, global batch size of 512, and gradient accumulation over 16 steps. The sequence length is set to 2048 tokens. We train for a total of 1533 steps. Weight decay is set to $0.1$ and gradient clipping is applied at 1.0. All training is conducted in bfloat16 mixed precision.

We evaluate the fine-tuned models on five mathematical reasoning benchmarks: GSM8K (grade school math word problems) in both 0-shot and 4-shot settings, MATH (competition-level math problems) in 0-shot setting, NumGLUE (numerical reasoning benchmark) in both 0-shot and 4-shot settings, SimulEq (simultaneous equations) in 0-shot setting, and Aqua (arithmetic reasoning) in 0-shot setting. We use standard evaluation scripts for each benchmark. For optimizer hyperparameters, AdamW uses $\beta_1=0.9$, $\beta_2=0.95$, and $\epsilon=10^{-8}$. Muon uses $\beta=0.95$ with Newton-Schulz steps $K=5$. OLion uses $\beta_1=0.95$, $\beta_2=0.98$, and Newton-Schulz steps $K=5$.

Figure~\ref{fig:sft_results} reports SFT loss trajectories. \nameA{} consistently improves over Muon and is competitive
with (and in some runs better than) AdamW throughout training. 


\begin{figure}[htbp]
    \centering
        \includegraphics[width=0.75\linewidth]{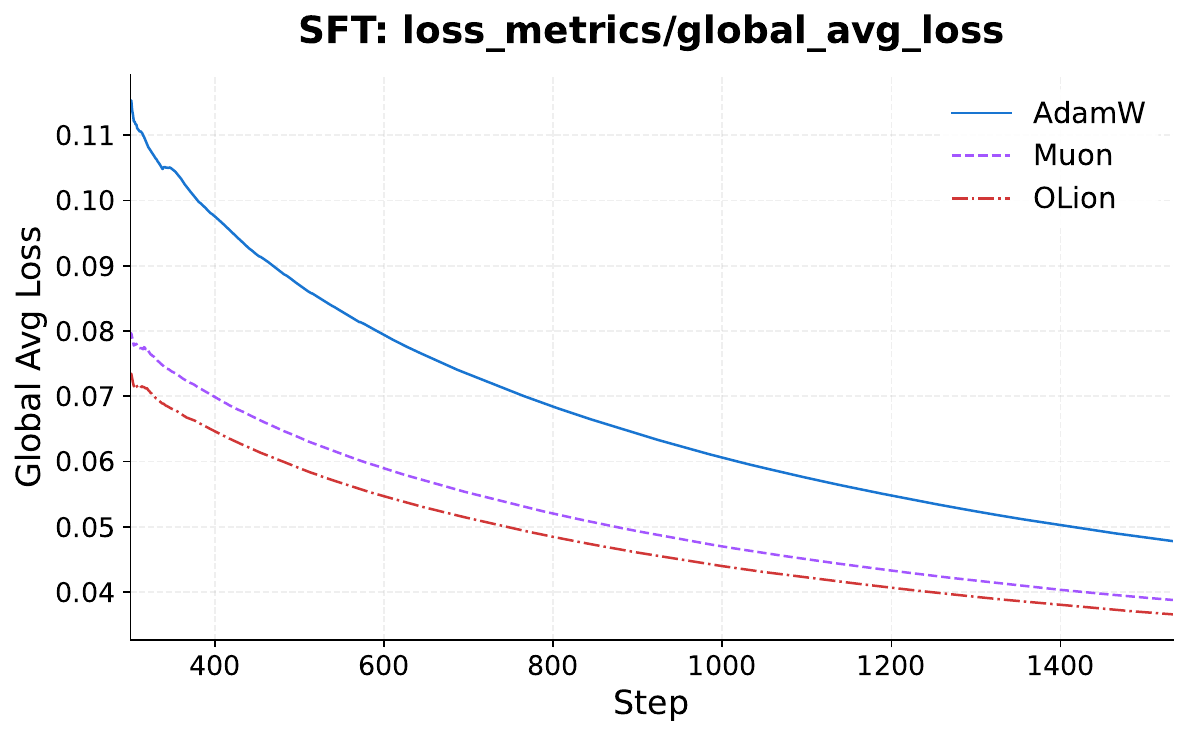}
    \caption{Supervised fine-tuning loss on Llama-3.1-8B.}
    \label{fig:sft_results}
\end{figure}


\end{document}